\definecolor{Gray}{gray}{0.85}
\definecolor{Yellow}{rgb}{0.5,0.5,0.5}
\newcommand{\rknn}{Rev-$k$-NN\xspace}
\newcommand{\name}{\textsc{GnnXemplar}\xspace}
\newcommand{\gnns}{\textsc{Gnn}s\xspace}
\newcommand{\gnn}{\textsc{Gnn}\xspace}
\newcommand{\CG}{\mathcal{G}\xspace}
\newcommand{\CV}{\mathcal{V}\xspace}
\newcommand{\CE}{\mathcal{E}\xspace}
\newcommand{\X}{\boldsymbol{X}\xspace}
\newcommand{\cx}{\mathbf{x}\xspace}
\setlist{nolistsep,leftmargin=*}
\newtheorem{thm}{\textbf{Theorem}}
\newtheorem{defn}{\textbf{Definition}}
\newtheorem{lem}{\textbf{Lemma}}
\newtheorem{prob}{\textbf{Problem}}
\newcommand{\rev}[1]{\textcolor{black}{#1}}
\definecolor{babypink}{rgb}{0.96, 0.76, 0.76}
\definecolor{top1}{HTML}{a5dc82}
\definecolor{top2}{HTML}{dff3d9}
\definecolor{c1}{HTML}{d5e8d4}
\definecolor{c1_1}{HTML}{82b366}
\definecolor{c2}{HTML}{ffe6cc}
\definecolor{c2_1}{HTML}{d79b01}
\definecolor{c3}{HTML}{dae8fc}
\definecolor{c3_1}{HTML}{6c8ebf}
\definecolor{text_grey}{HTML}{5e5e5e}
\renewcommand{\backref}[1]{}
\renewcommand{\backrefalt}[4]{%
    \ifcase #1%
    \or%
    (Cited on p.~\bfseries #2 \(\boldsymbol{\hookleftarrow}\))%
    \else%
    (Cited on pp.~\bfseries #2 \(\boldsymbol{\hookleftarrow}\))%
    \fi
}
\title{\name: Exemplars to Explanations - Natural Language Rules for Global GNN Interpretability}
\author{%
  Burouj Armgaan \\
  Dept. of CSE, IIT Delhi\\
  \texttt{csz228001@iitd.ac.in} \\
  \And
  Eshan Jain \\
  Dept. of CSE, IIT Delhi\\
  \texttt{eshan292@gmail.com} \\
  \And
  Harsh Pandey \\
  Fujitsu Research of India, Bangalore\\
  \texttt{Harsh.p@fujitsu.com} \\
  \And
  Mahesh Chandran \\
  Fujitsu Research of India, Bangalore\\
  \texttt{Mahesh.Chandran@fujitsu.com} \\
  \And
  Sayan Ranu \\
  Dept. of CSE and Yardi ScAI, IIT Delhi \\
  \texttt{sayanranu@iitd.ac.in} \\
}
\begin{document}
\maketitle
\begin{abstract}
 Graph Neural Networks (\gnns) are widely used for node classification, yet their opaque decision-making limits trust and adoption. While local explanations offer insights into individual predictions, global explanation methods, those that characterize an entire class, remain underdeveloped. Existing global explainers rely on motif discovery in small graphs, an approach that breaks down in large, real-world settings where subgraph repetition is rare, node attributes are high-dimensional, and predictions arise from complex structure-attribute interactions. We propose \name, a novel global explainer inspired from \textit{Exemplar Theory} from cognitive science. \name identifies representative nodes in the \gnn embedding space, exemplars, and explains predictions using natural language rules derived from their neighborhoods. Exemplar selection is framed as a coverage maximization problem over reverse $k$-nearest neighbors, for which we provide an efficient greedy approximation. To derive interpretable rules, we employ a self-refining prompt strategy using large language models (LLMs). Experiments across diverse benchmarks show that \name significantly outperforms existing methods in fidelity, scalability, and human interpretability, as validated by a user study with 60 participants.
\end{abstract}

\vspace{-0.2in}
\section{Introduction and Related Works}
\label{sec:introduction_and_related_works}
\vspace{-0.1in}
Node classification using Graph Neural Networks (\gnns) has found wide-ranging applications across diverse domains, including protein function prediction~\cite{graphsage,graphreach}, user profiling~\cite{pinsage,triper}, and fraud detection~\cite{fraud}. Despite their success, \gnns, like other deep learning models, are often regarded as black boxes: they achieve high performance, but the reasoning behind their predictions remains largely opaque. To address this, \textit{\gnn explainers} aim to illuminate the decision-making process by identifying the substructures and features in the input graph that the model relies on for its predictions~\cite{armgaan2024graphtrail,glgexplainer,gnnexplainer,pgexplainer,induce,gnnxbench,gnninterpreter,gcfexplainer,gcexpleainer_extended}.
\vspace{-0.1in}
\subsection{Existing Works and Open Challenges}
\vspace{-0.1in}
Existing methods for \gnn explanation can be broadly categorized into two types: local and global. Fig.~\ref{fig:tree} in the Appendix provides the detailed taxonomy. 
\vspace{-0.05in}
\begin{enumerate}
    \item \textbf{Local explanations} focus on individual predictions. They aim to explain why a specific input graph (or node) received a particular label. While informative, these explanations are often instance-specific and do not generalize beyond the particular case being analyzed.

    \item \textbf{Global explanations}, in contrast, seek to uncover general patterns that apply across multiple instances. They aim to explain the conditions under which a \gnn assigns any input to a particular class, producing a single, interpretable explanation per class.
\end{enumerate}

While local explanation techniques are relatively mature, global \gnn explainers remain an emerging area of research, as evident from Fig.~\ref{fig:tree}. In this work, we advance this direction by proposing a global explanation framework for node classification based on \textit{exemplar theory}~\cite{medin1978context}.

Exemplar theory, rooted in cognitive psychology, explains how humans categorize objects and ideas. It posits that individuals make category judgments by comparing new stimuli with previously encountered instances, called \textit{exemplars}, stored in memory. These exemplars are not arbitrary; they tend to represent \textit{typical} members of a category~\cite{medin1978context}. A new stimulus is assigned to a category based on the number and degree of similarities it shares with exemplars from that category.

We adapt this idea to \gnn-based node classification by treating certain representative nodes in the embedding space as \textit{exemplars}: nodes that encapsulate the characteristics of many others in the same class. For each exemplar, we extract an interpretable signature describing the distribution of features and/or structural patterns that characterizes the exemplar and the population of nodes it represents in the embedding space. Then, for any unseen node, we explain the \gnn’s prediction by referencing the signature of the exemplar it most closely resembles. This approach enables global yet instance-relevant explanations grounded in learned graph structure and semantics.
 
\textbf{Open Challenges:} Existing global \gnn explainers~\cite{mage, gnninterpreter,xgnn,gcneuron,armgaan2024graphtrail,glgexplainer,dagexplainer} have primarily targeted graph classification tasks on small-scale datasets such as molecular graphs. These methods typically aim to identify recurring substructures, commonly referred to as motifs or concepts, that the \gnn being explained detects to classify graphs~\cite{mage, gnninterpreter,xgnn,gcneuron,dagexplainer}  Some explainers~\cite{armgaan2024graphtrail,glgexplainer} go further by constructing the Boolean logic rules over motifs that aligns with the model's predictions. However, these motif-based explanation strategies face significant challenges when extended to node classification in large, real-world graphs with rich node attributes (Ex. citation and linked documents, financial networks, etc.)
\begin{itemize}
    \item \textbf{Attribute-Topology Interaction:} In large graphs, \gnn predictions are typically a complex function of both the graph structure and high-dimensional node attributes. This makes motif discovery difficult because the classical definition of a motif, grounded in graph isomorphism, handles only discrete node labels. Moreover, in real-world graphs, the repetition of the exact same subgraph with identical node attributes is rare, making the very notion of a motif ambiguous. This calls for a shift from exact, symbolic subgraph matching to an approximate space where repetitions of structurally and semantically similar patterns can be meaningfully observed. We address this by identifying recurring combinations of structure and attributes not in the raw input graph but in the \gnn embedding space by locating dense neighborhoods.

    \item \textbf{Computational Complexity:} Motif discovery involves solving the \textit{subgraph isomorphism} problem, which is NP-hard, making them prohibitively expensive on large graphs. The problem is further exacerbated when accounting for both topology and node attributes. Consequently, as we will show later in \S~\ref{sec:experiments}, existing global \gnn explainers often fail to scale on large graphs.

    \item \textbf{Cognitive and Visual Overload:} In small graphs, like molecules, motifs (e.g., functional groups) are compact and human-interpretable. However, in large real-world graphs, even the $2$-hop neighborhood of a node can include hundreds or thousands of nodes. Visualizing or interpreting such patterns quickly becomes unwieldy and exceeds human cognitive limits. 
\end{itemize}
\vspace{-0.1in}
\subsection{Contributions}
\vspace{-0.1in}
In this work, we present \name, which addresses the limitations outlined above.
\vspace{-0.05in}
\begin{itemize}
    \item \textbf{Problem formulation:}  
Effective explanations of \gnn predictions must satisfy two key criteria: (1) they should be \textit{faithful}, meaning they accurately reflect the model’s decision-making process, and (2) they should be \textit{interpretable}, allowing humans to understand the reasoning behind predictions despite the underlying complexity of the graph modality. To address these dual objectives, we take inspiration from \textit{Exemplar Theory}~\cite{medin1978context}. Exemplar theory posits that humans categorize new instances by comparing them to representative examples (exemplars) previously encountered. By adapting this principle, we explain the \gnn’s prediction for a node by referencing similar, representative nodes in the embedding space --- its exemplars. Furthermore, to enhance interpretability, we move beyond subgraph visualizations which are inherently ineffective in large, dense graphs due to their complexity and scale. Instead, we distill the defining characteristics of each exemplar and its associated population into \textit{textual explanations}. This shift to natural language enables more accessible and cognitively manageable insights into the \gnn’s decision-making process. We validate this claim through a user survey (\S~\ref{sec:survey}).

\item {\bf Novel methodology:} We develop \name, which integrates several innovative components. First, exemplar identification is cast as a coverage maximization problem over reverse $k$-nearest neighbor relationships. We prove that this problem is NP-hard and submodular, and accordingly propose a greedy algorithm with a $(1 - \frac{1}{e})$ approximation guarantee to the optimal solution. To uncover the signature characteristics of each exemplar and the population it represents, we leverage LLMs to iteratively propose and refine interpretable logical rules that are consistent with the \gnn's predictions.
    
\item {\bf Empirical analysis:} We conduct extensive experiments on a diverse suite of homophilous and heterophilous graphs. Our analysis reveals that: (1) existing \gnn explainers are  inadequate for node classification  on large graphs with complex node attributes; (2) \name provides high-fidelity explanations of \gnn predictions; and (3) the text-based explanations are preferred over subgraph visualization, as validated through a user study involving $60$ participants.
\end{itemize}

\vspace{-0.1in}
\section{Preliminaries and Problem Formulation}
\label{sec:formulation}
\vspace{-0.1in}
\begin{defn}[Graph]
\label{def:input_graph}
A graph is defined as $\CG=(\CV,\CE,\X)$ over a node set $\CV$, edge set $\CE=\{(u,v) \mid u,v \in \mathcal{\CV}\}$ and node attributes $\X =\{\cx_v \mid v\in\CV\}$ where $\cx_v\in\mathbb{R}^d$ is the set of features characterizing each node. 
\end{defn}
  \begin{defn}[Node Classification]
  \label{def:classification}
In node classification, we are given a single input graph $\mathcal{G} = (\mathcal{V}, \mathcal{E}, \mathbf{X})$, where a subset of nodes $\mathcal{V}_{tr} \subset \mathcal{V}$ is associated with known class labels from the set $\{y_1, \cdots, y_c\}$. The objective is to train a \gnn model $\Phi$ such that, for any node $v \in \mathcal{V} \setminus \mathcal{V}_{tr}$, the prediction error of its class label is minimized.
  \end{defn}
  Error may be measured using known metrics such as cross-entropy loss, negative log-likelihood, etc. 
  \looseness=-1
\begin{defn}[Exemplars]
Exemplars refer to representative nodes within the graph that embody prototypical structural and attribute characteristics shared by many other nodes in the same class. In the context of node classification, these exemplars serve as anchors for human-understandable explanations. Each exemplar, $e$, is associated with a subset of training nodes, $R_e\subseteq\CV_{tr}$, which represents the population that $e$ exemplifies. A node $e \in \mathcal{V}_{\text{tr}}$ is called an exemplar for the class $c \in \mathcal{Y}$ if $|R_e| \geq \tau$, where
$$
R_e = \left\{ 
v \in \mathcal{V}_{\text{tr}} \\ \ \middle| \ 
\begin{aligned}
    &\Phi(e) = c, \\
    &\Phi(v) = c, \\
    &d(z_e, z_v) \leq \delta
\end{aligned}
\right\}
$$
where $z_v \in \mathbb{R}^d$ is the GNN-learned embedding of node $v$, $d(z_e, z_v)$ is some distance function, $\delta \in \mathbb{R}$ is a distance threshold, and $\tau \in \mathbb{N}$ is a minimum population size hyperparameter.
\end{defn}
As we will see later,  the thresholds will automatically be discovered from the data by exploiting the $k$-nearest neighbor relationship along with a greedy selection algorithm.

\begin{defn}[Exemplar Signature]
\label{def:exemplar_signature}
The \textit{signature} of exemplar $e$, denoted as $\sigma_e$, is a boolean formula composed of interpretable conditions over the structural and attribute properties of the local neighborhood of $e$. This formula is constructed such that its truth value approximates $\Phi$’s predictions over $R_e$, i.e., with high probability $\forall v \in R_e,:\sigma_e(v) = \Phi(v)$. The signature $\sigma_e$ of exemplar $e$ from class $c$ is a boolean function $f(\pi_1(v), \pi_2(v), \dots, \pi_k(v)) \in \{0,1\}$, such that, with a high probability, $\forall v\in R_e, \sigma_e(v) = 1$, iff $\Phi(v)=c$. Here, $\pi_i$ are the predicates that evaluate to Boolean conditions over interpretable properties such as:
\begin{itemize}
    \item Node features (e.g., $x_v[\texttt{age}] > 30$)
    \item Local structure (e.g., $\texttt{degree}(v) \geq 3$)
    \item Neighborhood composition (e.g., $\texttt{fractionClass}(v, \  \texttt{hop}=1, c') > 0.6$)
\end{itemize}
\end{defn}
For instance, an exemplar of fraudulent nodes in a transaction graph, might be an account that makes frequent, low-value transfers to many recently created accounts that show no further activity. Assuming each node has attributes indicating the average transaction value and average frequency of transactions per week, its signature could be a rule like: 
\textit{``Has an average transaction amount below \$100, performs more than 10 transactions per week, and is dissimilar in transaction frequency from the majority of its neighbors''}
\begin{prob}[Global \gnn Explanation through Exemplar Signatures]
\label{prob:explainer_node}
Let $\Phi$ be a trained \gnn that assigns to each node $v \in \mathcal{V}$ a class label from $\{y_1, \cdots, y_c\}$. For each class $y_i$, let $\mathcal{E}_i \subseteq \mathcal{V}$ denote a selected set of exemplar nodes that are representative of class $y_i$, and let $\sigma_e$ be the boolean signature associated with exemplar $e \in \mathcal{E}_i$.

The goal is to construct a global explanation in the form of a Boolean formula $f_i$ for each class $y_i$, such that:
\vspace{-0.05in}
\[
f_i(v) = \bigvee_{e \in \mathcal{E}_i} \sigma_e(v)
\]
where, $\forall v \in \mathcal{V}_{tr},\;
 \Phi(v) = y_i \Leftrightarrow f_i(v) = \texttt{TRUE}$. 
 Each $f_i$ serves as the global explanation for class $y_i$, built by aggregating exemplar-level signatures via logical OR. Each exemplar signature is free to use all Boolean operators.
\end{prob}
The formulation of explanation through exemplars presents two central challenges:
\vspace{-0.05in}
\begin{enumerate}
    \item \textit{How do we identify an optimal set of exemplars?} Exemplar selection is a combinatorial optimization problem over the space of all nodes in the graph. Choosing too many exemplars compromises both interpretability and computational efficiency. A small, well-chosen exemplar set enables concise global explanations, as the final explanation takes the form of a disjunction over exemplar-specific rules.
    
    \item \textit{How do we derive boolean logic signatures for each exemplar?} The space of candidate logical expressions, composed of node attributes and structural features, is exponentially large, rendering exhaustive search intractable. Additionally, these rules must strike a balance between fidelity to the GNN’s predictions and human interpretability. A further challenge arises from the need to express these logical rules as natural language descriptions requiring semantic precision.
\end{enumerate}

\section{Methodology}
\label{sec:methodology}
\vspace{-0.1in}
\begin{figure}[t]
  \centering
 \includegraphics[width=\textwidth]{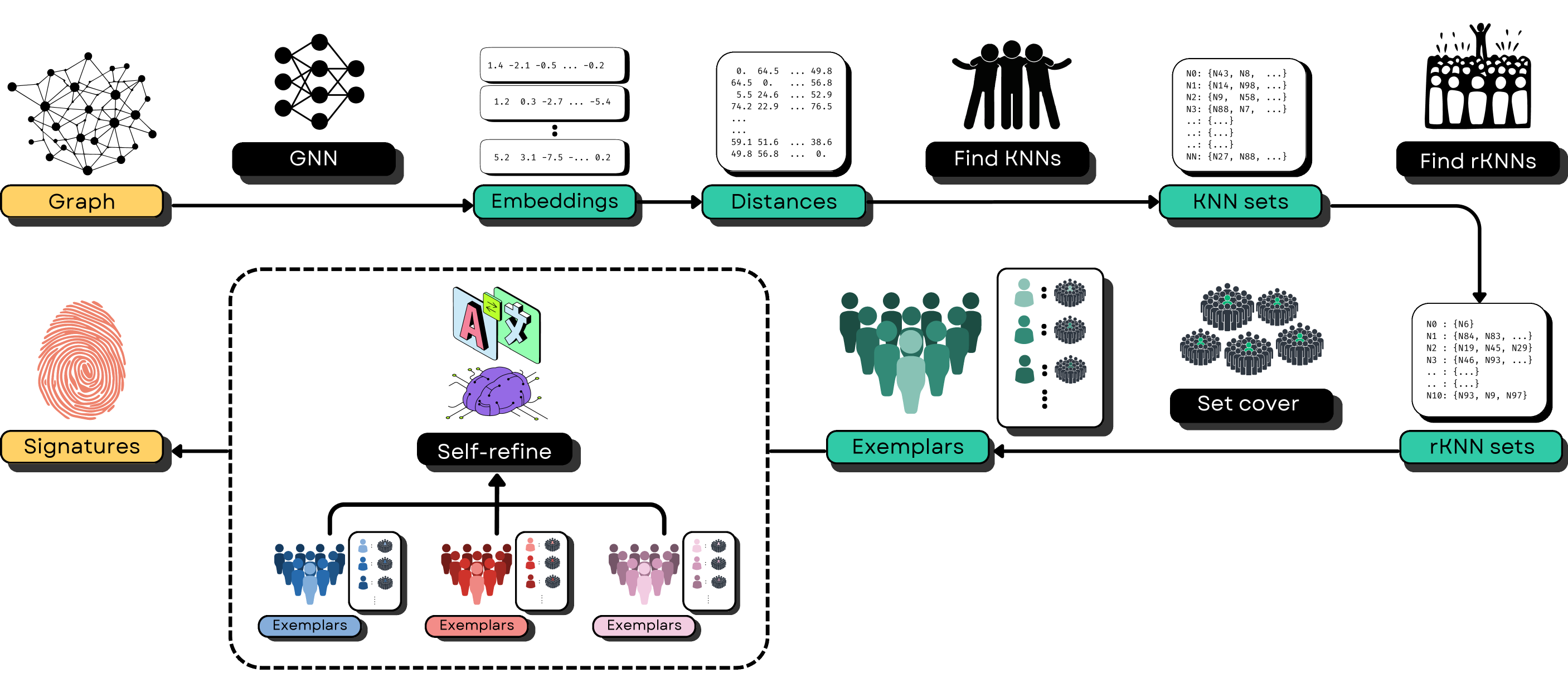}
  \caption{The pipeline of \name.}
  \label{fig:pipeline}
  \vspace{-0.2in}
\end{figure}

Fig.~\ref{fig:pipeline} presents the pipeline of \name. There are two distinct components: first, we identify a budget-constrained set of exemplars, and next, we extract boolean rules, expressed in natural language, through iterative self-refining  using LLM. We next detail each of these steps. 
\subsection{Exemplar Identification.}
\label{sec:exemplar}
\vspace{-0.1in}
\name aims to identify a set of $b$ \textit{exemplar} nodes that optimally represent the full training set; $b$ is a tunable hyper-parameter balancing complexity of the explanation with higher expressivity. This selection process is guided by two critical criteria:
\begin{itemize}
    \item \textbf{Representativeness:} Each exemplar node should be close to a large number of other nodes sharing the same \textit{\gnn-predicted} class label in the embedding space, thereby capturing common structural and attribute patterns learned by the model.
 We quantify representativeness using the notion of \textit{reverse $k$-nearest neighbors} (\S~\ref{sec:rknn}) in the \gnn embedding space. Specifically, if node $v$ appears frequently in the $k$-NN sets of other nodes with the same \textit{predicted} label, it is considered representative of those nodes' learned semantics. Such nodes are prioritized for inclusion in the exemplar set, as they enable coverage of large regions of the embedding space and are likely to approximate the model’s behavior over similar instances.

    \item \textbf{Diversity:} The selected exemplars should be well spread out in the \gnn embedding space to ensure a wide range of model behaviors are covered (\S~\ref{sec:maxcover}). 
\end{itemize}
By jointly optimizing for representativeness and diversity, \name ensures that the selected exemplars collectively span a broad range of the model's learned representations.

\subsection{Quantifying Representativeness through Reverse \texorpdfstring{$k$}{k}-NN}
\label{sec:rknn}
The representative power of a node is defined as follows.
\begin{defn}[Reverse $k$-NN and Representative Power]
\label{def:rknn}
Let $\mathbf{h}_v$ denote the \gnn embedding of node $v$. The $k$ nearest neighbors of node $v$, denoted as $k$-NN$(v)$, are the $k$ nodes from the train set with embeddings closest to $\mathbf{h}_v$ and sharing the same \gnn predicted class label (we use $L_2$ distance, but other distances may also be used). The \textit{reverse} $k$-NN of node $v$ is the set of nodes for which $v$ appears in their $k$-NN set. Formally,
\[
\text{\rknn}(v) = \left\{ u \in \CV_{tr} \mid v \in k\text{-NN}(u), \Phi(v)=\Phi(u) \right\}
\]
The representative power of node $v$ is then defined as 
$$
\Pi(v) = \frac{|\text{\rknn}(v)|}{| \left\{ u \in \CV_{tr} \mid \Phi(v)=\Phi(u) \right\}|}
$$
\end{defn}
A high value of $\Pi(v)$ indicates that node $v$ frequently appears in the $k$-NN sets of many other nodes, implying it resides in a dense region of the embedding space and captures shared representational characteristics. Thus, it is a strong candidate for inclusion in the exemplar set.
\subsubsection{Sampling for Scalable Computation of Reverse \texorpdfstring{$k$}{k}-NN}
\label{sec:rknnsampling}
Computing the $k$ nearest neighbors for each node consumes $\mathcal{O}(n\log k) \approx \mathcal{O}(n)$ time, since $k \ll n$ and $n = |\CV_{tr}|$ is the number of nodes. Thus, computing $k$-NN for all nodes and then constructing the \rknn requires $\mathcal{O}(n^2)$ time, which becomes prohibitively expensive on large-scale graphs. 

To address this, we adopt a sampling-based strategy that enables scalable approximation of reverse $k$-NN with theoretical \cite{bonsai}. Let $\mathbb{S} \subset \CV_{tr}$ be a uniformly sampled subset of $z \ll n$ nodes. We compute the $k$-NN only for nodes in $\mathbb{S}$, which reduces the computational cost to $\mathcal{O}(zn)$. 

We then define the approximate reverse $k$-NN of a node $v \in \CV_{tr}$ as $\widetilde{\text{\rknn}}(v) = \left\{ u \in \mathbb{S} \mid v \in \text{$k$-NN}(u), \Phi(v)=\Phi(u) \right\}$. Hence, the approximate representative power of $v$ is
$$
\widetilde{\Pi}(v) = \frac{|\widetilde{\text{\rknn}}(v)|}{|\left\{ u \in \mathbb{S} \mid \Phi(v)=\Phi(u) \right\}|}
$$
Here, $\widetilde{\Pi}(v)$ estimates how broadly node $v$ is retrieved as a nearest neighbor across the sample set. The sample size $z$ controls the trade-off between accuracy and efficiency. Leveraging Chernoff bounds, we establish that even a small $z$ yields high-confidence approximations:

\begin{lem}
\label{thm:rknn_gnn}
Given an error threshold $\theta$ and confidence level $1 - \delta$, it suffices to sample $z \geq \frac{\ln{\left(\frac{2}{\delta}\right)}(2+\theta)}{\theta^2}$ nodes to ensure that for any $v \in \CV_{tr},\;P\left( \left| \widetilde{\Pi}(v) - \Pi(v) \right| \leq \theta \right) \geq 1 - \delta$.
\end{lem}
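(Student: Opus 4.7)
The plan is to reduce the statement to a standard multiplicative Chernoff concentration inequality. I would proceed in three steps.

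First, I would decouple the ratio defining $\widetilde{\Pi}(v)$. Conditional on the subset of $\mathbb{S}$ whose \gnn-predicted class matches $\Phi(v)$, each element is an i.i.d.\ uniform draw from $\CV_v := \{u \in \CV_{tr} : \Phi(u) = \Phi(v)\}$, and for each such sample the indicator ``$v \in k\text{-NN}(u)$'' is Bernoulli with success probability exactly $\Pi(v) = |\text{\rknn}(v)|/|\CV_v|$. Letting $z'$ denote the size of this class-matched subsample, $\widetilde{\Pi}(v)$ is therefore the empirical mean of $z'$ i.i.d.\ Bernoulli$(\Pi(v))$ random variables.

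Second, I would invoke the two-sided multiplicative Chernoff bound: for $X = \sum_{i=1}^{z'} X_i$ with independent Bernoullis of mean $\Pi(v)$ and $\mu = z'\Pi(v)$,
\[
P\!\left(|X - \mu| \geq \epsilon\mu\right) \;\leq\; 2\exp\!\left(-\mu\epsilon^2/(2+\epsilon)\right).
\]
Setting $\epsilon = \theta/\Pi(v)$ converts this multiplicative tail into the additive form
\[
P\!\left(|\widetilde{\Pi}(v) - \Pi(v)| \geq \theta\right) \;\leq\; 2\exp\!\left(-\frac{z'\theta^2}{2\Pi(v) + \theta}\right).
\]
Using $\Pi(v) \leq 1$ loosens the denominator to $2+\theta$. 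Requiring the resulting bound to be at most $\delta$, solving for $z'$, and taking $z' \geq z$ yields exactly $z \geq (2+\theta)\ln(2/\delta)/\theta^2$, as claimed.

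I expect the main obstacle to be the clean handling of the ratio estimator: both the numerator and the denominator of $\widetilde{\Pi}(v)$ are themselves functions of the random sample $\mathbb{S}$. The class-conditioning reduction sketched above is what makes the calculation tractable, because it replaces the ratio with a plain empirical mean over an i.i.d.\ same-class subsample of known size; once this reduction is in place, the Chernoff tail calculation and the inversion to obtain a sample-complexity bound are entirely routine.
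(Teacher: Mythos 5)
Your core calculation is identical to the paper's: both apply the two-sided multiplicative Chernoff bound $P\left(|X-\mu|\geq\epsilon\mu\right)\leq 2\exp\left(-\mu\epsilon^{2}/(2+\epsilon)\right)$, substitute $\epsilon=\theta/\Pi(v)$ to convert the multiplicative tail into an additive one, loosen the denominator via $\Pi(v)\leq 1$, and invert to obtain $z\geq(2+\theta)\ln(2/\delta)/\theta^{2}$. The difference lies in the setup, and that is where your argument has a genuine gap. You rightly observe that $\widetilde{\Pi}(v)$ is a ratio whose denominator $z'$ (the number of class-matched samples in $\mathbb{S}$) is itself random, and you condition on it so that $\widetilde{\Pi}(v)$ becomes an empirical mean of $z'$ i.i.d.\ Bernoulli$(\Pi(v))$ variables. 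But your closing step, ``taking $z'\geq z$,'' is backwards: the class-matched subsample is a subset of $\mathbb{S}$, so $z'\leq z$ always, and requiring $z$ to exceed the threshold does not force $z'$ to exceed it. Your tail bound $2\exp\left(-z'\theta^{2}/(2+\theta)\right)$ is therefore not controlled by the stated condition on $z$; closing the argument would need an additional high-probability lower bound on $z'$ (e.g., a Chernoff bound on the number of class-matched draws, which introduces a dependence on the class proportion) together with a union bound. The paper avoids this issue by construction rather than by resolving it: its proof sums the indicators $X^{v}_{u}$ over \emph{all} $z$ samples and identifies $\widetilde{\Pi}(v)$ with $X^{v}/z$, so the estimator is a plain Binomial$(z,\cdot)$ mean and the effective sample size is $z$ itself — at the cost of not literally matching the class-normalized definition of $\widetilde{\Pi}(v)$ in the main text. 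Your instinct about the ratio estimator is the more careful one, but as written the lemma's bound on $z$ does not follow from your conditioning; you must either adopt the whole-sample normalization or supply the missing concentration step for $z'$.
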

\textsc{Proof.} See App.~\ref{app:rknn}. 

This result has two key implications:
\begin{itemize}
    \item The required number of samples is independent of the number of nodes in the train set, i.e., $|\CV_{tr}|$.
    \item Since $z$ scales logarithmically with $\ln(1/\delta)$, even a small sample ensures high-probability bounds. Thus, the overall computation cost for reverse $k$-NN be
    comes $\mathcal{O}(n)$ instead of $\mathcal{O}(n^2)$.
\end{itemize}
While we use the notations \rknn$(v)$ and $\Pi(v)$ in the subsequent discussion, approximate variants via sampling may be used in practice. We evaluate the quality-efficiency trade-off in App.~\S\ref{app:impact_of_parameters}.
\subsection{Coverage Maximization}
\label{sec:maxcover}
We aim to identify a subset of $b$ \textit{exemplar nodes} that collectively offer the broadest coverage over the training set in the \gnn embedding space.
\begin{defn}[Exemplar Set]
Let the representative power of a set of exemplar nodes $\mathbb{A} \subseteq \CV$ be defined as:
{\small
\begin{equation}
\label{eq:setrep_gnn}
\Pi(\mathbb{A}) = \left\lvert \bigcup_{v \in \mathbb{A}} \text{\rknn}(v) \right\rvert \Biggm/ \lvert \CV_{tr} \rvert
\end{equation}
}
Here, $\text{\rknn}(v)$ denotes the reverse $k$ nearest neighbors of node $v$ in the \gnn embedding space.

Given a node set $\CV$ and budget $b$, we seek a subset of train nodes $\mathbb{A}^* \subseteq \CV_{tr}$ of size $b$ that maximizes:
\begin{equation}
\label{eq:coverage_gnn}
\mathbb{A}^* = \arg\max_{\mathbb{A} \subseteq \CV_{tr}, \lvert \mathbb{A} \rvert = b} \Pi(\mathbb{A})
\end{equation}
\end{defn}
%
%
%
\begin{algorithm}[t]
{\scriptsize
    \caption{Greedy Node Selection}
    \label{alg:greedy_gnn}
    \begin{algorithmic}[1]
    \REQUIRE Graph $\CG$, budget $b$, \rknn sets of nodes
    \ENSURE exemplar set $\mathbb{A}$ of size $b$
        \STATE $\mathbb{A} \leftarrow \emptyset$
        \WHILE{$|\mathbb{A}| < b$}
            \STATE $v^* \leftarrow \arg\max_{v \in \CV_{tr} \setminus \mathbb{A}} \Pi(\mathbb{A} \cup \{v\}) - \Pi(\mathbb{A})$
            \STATE $\mathbb{A} \leftarrow \mathbb{A} \cup \{v^*\}$
        \ENDWHILE
    \STATE \textbf{Return} $\mathbb{A}$
    \end{algorithmic}}
\end{algorithm}
\begin{thm}
\label{thm:nphard_gnn}
Maximizing the representative power in Eq.~\ref{eq:coverage_gnn} is NP-hard.
\end{thm}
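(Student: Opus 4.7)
The approach is a polynomial-time reduction from the classical \textsc{Maximum Coverage} problem, which is well-known to be NP-hard. I begin by observing that Eq.~\ref{eq:coverage_gnn} is \emph{syntactically} a Max Coverage instance whose universe is $\CV_{tr}$ and whose candidate sets are $\{\text{\rknn}(v) : v \in \CV_{tr}\}$: we simply pick $b$ of these sets to maximize the cardinality of their union. Hardness therefore follows once I show that any Max Coverage instance can be polynomially encoded as a graph whose reverse-$k$-NN family realizes the prescribed set system.

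Given an instance with universe $U=\{u_1,\dots,u_n\}$, sets $\mathcal{S}=\{S_1,\dots,S_m\}$, and budget $b$, I would construct training nodes $\CV_{tr} = \{r_1,\dots,r_m\} \cup \{e_1,\dots,e_n\}$ --- one ``representative'' $r_i$ per set $S_i$ and one ``element'' $e_j$ per universe element $u_j$ --- with every node assigned the same \gnn-predicted label, so the same-class constraint in Definition~\ref{def:rknn} becomes vacuous. I would then assign embeddings in a sufficiently large Euclidean space with $\mathbf{h}_{r_i} = c\,\mathbf{u}_i$ along an orthonormal frame for a large constant $c$, and $\mathbf{h}_{e_j} = \sum_{i : u_j \in S_i} \mathbf{u}_i + M\,\mathbf{v}_j$ where the $\mathbf{v}_j$ are unit vectors in an auxiliary orthogonal block and $M$ is large. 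A direct $L_2$-distance computation shows that the $d_j := |\{i : u_j \in S_i\}|$ representatives closest to $e_j$ are exactly those with $u_j \in S_i$, and that large $M$ pushes element--element and element--non-matching-representative distances strictly above element--matching-representative distances.

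To accommodate a single fixed $k$ despite varying $d_j$, I would pad each $e_j$'s $k$-NN list with ``dummy'' nodes placed infinitesimally close to $e_j$ along $\mathbf{v}_j$, filling the remaining $k - d_j$ slots while being far from every other node, so their rknn sets are singletons. Under this construction, $\text{\rknn}(r_i) = \{e_j : u_j \in S_i\}$ for every real representative, while dummies and elements have rknn sets of size at most one. Hence, for any $\mathbb{A}\subseteq \{r_1,\dots,r_m\}$, $\Pi(\mathbb{A})$ matches the normalized coverage $|\bigcup_{i\in \mathbb{A}} S_i|/|\CV_{tr}|$, and an optimal size-$b$ solution of Eq.~\ref{eq:coverage_gnn} decodes in polynomial time to an optimal Max Coverage solution.

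The main obstacle is the construction itself --- realizing an arbitrary set system as an rknn-family under a single fixed $k$ while ensuring that the optimization outcome is untouched by auxiliary nodes. The padding-with-dummies trick combined with the auxiliary orthogonal block for elements handles both the varying-$d_j$ issue and prevents element-to-element proximity from contaminating the rknn sets of the real representatives. Two verifications then close the argument: (i) padding does not spuriously enlarge $\text{\rknn}(r_i)$ beyond $\{e_j : u_j \in S_i\}$, and (ii) neither element nor dummy nodes can strictly improve the optimum over the real representatives. Once both checks are in place, the NP-hardness of Max Coverage transfers directly to Eq.~\ref{eq:coverage_gnn}.
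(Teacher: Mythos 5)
Your overall strategy is the same as the paper's: reduce from a set-cover-type problem by building a bipartite element--set gadget in which the \rknn set of each ``set node'' equals the corresponding set, so that Eq.~\ref{eq:coverage_gnn} becomes coverage maximization over that set system. You actually attempt more rigor than the paper, which simply postulates a $k$-NN relation realizing the set system without constructing embeddings or confronting the fact that every $k$-NN list has exactly $k$ entries; your explicit orthogonal-frame embedding with the large-$M$ separation argument correctly makes the matching representatives the nearest representatives of each element node.

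However, the dummy-padding step has a genuine gap, and it is exactly the check you label (i). Each dummy attached to $e_j$ must itself have a $k$-NN list containing $k$ nodes. Its local cluster supplies only $e_j$ and the other $k-d_j-1$ dummies of $e_j$, i.e., $k-d_j$ nodes in total; the next-nearest nodes are precisely the matching representatives $r_i$ with $u_j\in S_i$ (they are closer than non-matching representatives and than any other element's cluster for large $M$). Hence every dummy of $e_j$ acquires all matching $r_i$ in its $k$-NN, so $\text{\rknn}(r_i)=\{e_j: u_j\in S_i\}\cup\{\text{dummies of }e_j: u_j\in S_i\}$, and $\Pi(\mathbb{A})$ counts each covered element $u_j$ with weight $k-d_j+1$ rather than $1$. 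This breaks the claimed exact correspondence with unweighted Maximum Coverage, so an optimal solution of Eq.~\ref{eq:coverage_gnn} need not decode to an optimal Max Coverage solution. The reduction is repairable: full coverage of all elements is attained if and only if $\Pi(\mathbb{A})$ reaches its maximum possible value, independently of these weights, so reducing from the decision version of Set Cover --- which is what the paper does --- still goes through with your gadget; alternatively, you would have to redesign the padding so that dummies never pull representatives into their own $k$-NN lists. You would also need to handle the tie-breaking among the mutually equidistant representatives when verifying your check (ii), though that is a minor perturbation issue.
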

\textsc{Proof.} See App.~\ref{app:nphard_gnn} for a reduction from the classic \textit{Set Cover Problem}~\citep{setcover}. \hfill$\square$

Fortunately, the objective $\Pi(\mathbb{A})$ is monotone and submodular, making it amenable to greedy approximation.

\begin{thm}
    \label{thm:submodular_gnn}
    The greedy algorithm (Alg.~\ref{alg:greedy_gnn}) guarantees $\Pi(\mathbb{A}_{greedy}) \geq \left(1 - \frac{1}{e} \right) \Pi(\mathbb{A}^*)$.
\end{thm}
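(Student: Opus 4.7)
The plan is to recognize that $\Pi(\mathbb{A})$ is essentially a normalized coverage function over the collection $\{\text{\rknn}(v)\}_{v\in\CV_{tr}}$, and then invoke the classical Nemhauser-Wolsey-Fisher $(1 - 1/e)$ guarantee for greedy maximization of monotone submodular set functions under a cardinality constraint. So the work reduces to verifying the two structural properties --- \emph{monotonicity} and \emph{submodularity} --- of $\Pi$, after which the theorem follows immediately.

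First, I would observe that the denominator $|\CV_{tr}|$ in Eq.~\ref{eq:setrep_gnn} is a positive constant independent of $\mathbb{A}$, so it suffices to analyze $F(\mathbb{A}) := |\bigcup_{v \in \mathbb{A}} \text{\rknn}(v)|$. \textbf{Monotonicity.} For any $\mathbb{A} \subseteq \mathbb{B} \subseteq \CV_{tr}$ we have $\bigcup_{v \in \mathbb{A}} \text{\rknn}(v) \subseteq \bigcup_{v \in \mathbb{B}} \text{\rknn}(v)$, hence $F(\mathbb{A}) \leq F(\mathbb{B})$. Also $F(\emptyset)=0$, giving non-negativity. \textbf{Submodularity.} Fix $\mathbb{A} \subseteq \mathbb{B}$ and any $u \notin \mathbb{B}$. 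The marginal gain from adding $u$ is
\[
F(\mathbb{A}\cup\{u\}) - F(\mathbb{A}) = \Bigl|\text{\rknn}(u) \setminus \bigcup_{v\in\mathbb{A}} \text{\rknn}(v)\Bigr|.
\]
Since $\bigcup_{v\in\mathbb{A}} \text{\rknn}(v) \subseteq \bigcup_{v\in\mathbb{B}} \text{\rknn}(v)$, the set on the right can only shrink when we replace $\mathbb{A}$ by $\mathbb{B}$, so the marginal gain at $\mathbb{B}$ is no larger than at $\mathbb{A}$. This is precisely the diminishing-returns characterization of submodularity.

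Second, with monotonicity and submodularity of $\Pi$ established, I would invoke the Nemhauser-Wolsey-Fisher theorem: for any non-negative monotone submodular function maximized subject to a cardinality constraint $|\mathbb{A}| \leq b$, the greedy algorithm that repeatedly adds the element of largest marginal gain (exactly what Alg.~\ref{alg:greedy_gnn} does at line 3) returns a set $\mathbb{A}_{greedy}$ with
\[
\Pi(\mathbb{A}_{greedy}) \;\geq\; \left(1 - \tfrac{1}{e}\right)\Pi(\mathbb{A}^*),
\]
yielding the theorem.

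\textbf{Main obstacle.} There is no substantive analytic obstacle here; the proof is almost entirely a verification that the problem fits a known template. The one subtle point worth flagging is that the reverse-$k$-NN sets carry an implicit dependence on the predicted labels $\Phi(\cdot)$ (nodes are only included when labels agree), so I should be explicit that each $\text{\rknn}(v)$ is a fixed ground set determined once by $\Phi$ and the embedding geometry, not something that changes with $\mathbb{A}$. Once that is made clear, the sets $\{\text{\rknn}(v)\}$ play exactly the role of the family in the classical MaxCover formulation, and the approximation bound transfers verbatim. If the approximate variant $\widetilde{\text{\rknn}}$ from \S~\ref{sec:rknnsampling} is used in practice, the identical argument applies with $\widetilde{\text{\rknn}}$ substituted for $\text{\rknn}$, so the guarantee is robust to the sampling step.
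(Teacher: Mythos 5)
Your proposal is correct and matches the paper's argument: the paper likewise reduces the theorem to monotonicity (via union inclusion) and submodularity (via the same diminishing-returns inequality on $|R_v \setminus \bigcup_{v'\in\mathbb{A}}\text{\rknn}(v')|$, merely phrased as a contradiction) and then appeals to the standard greedy guarantee. Your direct verification of submodularity is equivalent to the paper's contradiction-based version, so there is nothing substantive to distinguish the two.
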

\textsc{Proof.} App.~\ref{app:submodularity_gnn} shows the monotonicity and submodularity of $\Pi(\mathbb{A})$. An analogous proof for the case using the approximate Rev-$k$NN is provided in App.~\ref{app:submodularity_gnn_est}. \hfill$\square$

\rev{Alg.~\ref{alg:greedy_gnn} iteratively selects nodes that maximize marginal gain in coverage. By exploiting the transitivity of proximity in the embedding space, this greedy strategy avoids redundancy: if two nodes share most of their reverse $k$-NN sets, selecting one reduces the marginal utility of the other. Thus, the algorithm promotes both coverage and diversity within the budget.}
%

\subsection{Discovery of Exemplar Signatures}
\label{sec:signatures}
\vspace{-0.1in}
Our goal is to analyze the embeddings of an exemplar node and its \rknn set, and derive a symbolic Boolean rule, expressed in natural language, that closely matches the \gnn's predictions on this population (see Prob.\ref{prob:explainer_node}). To this end, we leverage the reasoning capabilities of large language models (LLMs), motivated by two key factors.
\textit{(1)} LLMs have demonstrated strong mathematical and causal reasoning abilities, including over graph-structured data\cite{zhang2024causal,dai2025how,selfrefine,math13071147}.
\textit{(2)} Given our aim to express logical rules in natural language, LLMs are particularly well suited due to their well-established strengths in linguistic articulation.
\textbf{Self-refinement paradigm:} Fig.~\ref{fig:selfrefine} presents the pipeline of the signature discovery process of an exemplar, which leverages the \textit{self-refine} platform~\cite{selfrefine}.
\begin{figure}[t]
    \centering
    \includegraphics[width=0.75\linewidth]{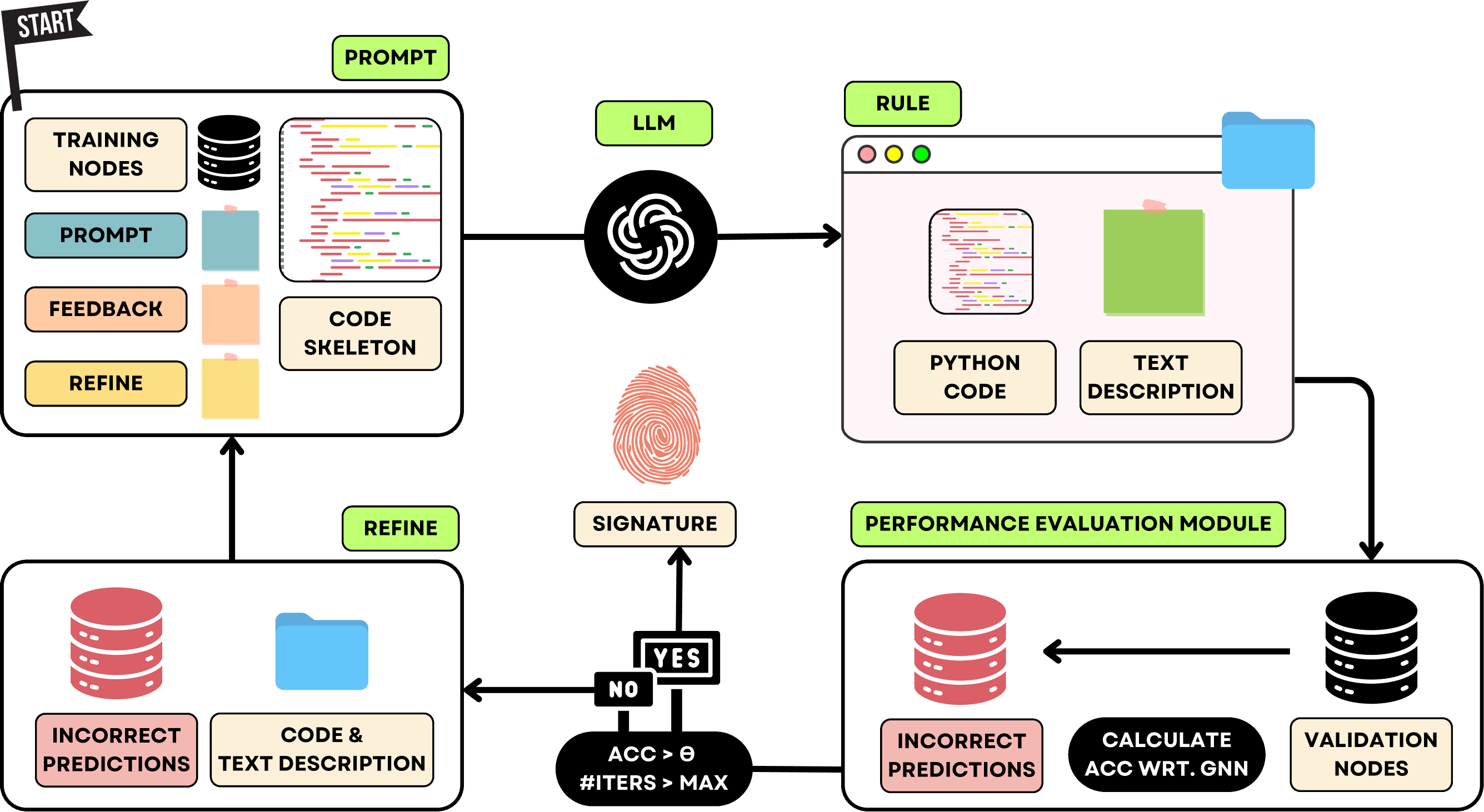}
    \caption{Pipeline of signature discovery through iterative self-refinement of LLM output.}
    \label{fig:selfrefine}
    \vspace{-0.2in}
\end{figure}
We select an initial sample of \textit{positive} and \textit{negative} nodes. 
The positive sample contains nodes from the \rknn set, and the negative sample contains those not in the \rknn. Both samples are drawn uniformly at random. The sample size is a hyperparameter. 
These node sets are further divided into training and validation sets. Instead of asking the LLM to directly output the boolean rule in natural language, we decouple it into two steps. 
The LLM is first asked to generate a \textit{python code}, that takes as input a node and its associated information, passes it through a boolean logic, and returns a \texttt{True}/\texttt{False} answer. The ideal rule returns \texttt{True} for all positive nodes and \texttt{False} for negative nodes. The rule encoded in the Python code is then translated to natural language by the LLM. Initially, the Python code is a trivial solution, such as returning \texttt{True} for any input node, which is iteratively improved through \textit{feedback}. The iterations stop when either the accuracy of the code (rule) exceeds a certain accuracy threshold on the validation set or the number of iterations reaches an upper limit. 


 \textbf{Prompt specification:} The prompt includes the following components; a visual overview is provided in Fig.~\ref{fig:query_prompt} (query), Fig.~\ref{fig:feedback_prompt} (\emph{feedback}), and Fig.~\ref{fig:refine_prompt} (\emph{refinement}) in the Appendix.
\begin{itemize}
    \item \textbf{Training and validation nodes:} As discussed above, we provide a set of positive and negative nodes. If the \gnn being explained is $\ell$ layers deep, then the embedding of a node is a function of its $\ell$-hop neighborhood only~\cite{gin}. Hence, we provide a \textit{summary} of the $\ell$-hop neighborhoods of each node in the \textit{positive} and \textit{negative} samples. The summary of a node includes: (1) the attributes of the node, \textit{(2)} the normalized frequency distribution of \gnn predicted class labels for all nodes in each hop from $1$ to $\ell$, \textit{(3)} the average $L_1$ distance per attribute between the exemplar and all nodes at each hop level from $1$ to $\ell$. While \gnn class distribution signals the degree of homophily-bias learned by the \gnn, the average attribute-wise distance offers insight into feature relevance in the embedding space. Specifically, if the average distances to positive and negative nodes are similar for a given attribute, it suggests that the feature contributes little to the embedding representation. Note that, since we only provide distribution-level information, the context size remains independent of the graph’s density. This property is crucial for ensuring that large graphs do not exceed the LLM's context window capacity.
    \item \textbf{Code skeleton:} The skeleton of the python code that the LLM is supposed to fill in. The skeleton contains the function definition that takes as input the summary of a node and the output specification, which should return a boolean value.
    \item \textbf{Feedback:} This includes the Python code and the associated natural language rule from the last iteration and the summary of all nodes where the latest rule failed to match the \gnn prediction.
\end{itemize}
%
\def\exemplar{v_e}
\newcommand{\khopgraph}[1]{\mathcal{N}_{k}(#1)}
\newcommand{\hpcommnet}[1]{\textcolor{orange}{\xspace [[\textbf{HP:}{#1}]]}}
\newcommand{\eshancomment}[1]{\textcolor{blue}{\xspace [[\textbf{Eshan:}{#1}]]}}
\vspace{-0.1in}
\section{Experiments}
\label{sec:experiments}
\vspace{-0.1in}
In this section, we benchmark \name and establish:
\vspace{-0.05in}
\begin{itemize}
    \item \textbf{Limitations of Existing Explainers:} Current explainers, primarily designed for graph classification on small graphs, fail to generalize effectively to node classification in large-scale graphs.
    \item \textbf{High-Fidelity explanations:} \name bridges this gap by consistently achieving high fidelity across both homophilic and heterophilic graph benchmarks.
    \item \textbf{User Preference for Textual Explanations:} Our user survey involving 60 participants shows a statistically significant preference for textual explanations over graph-based visualizations.
\end{itemize}
\vspace{-0.1in}
\subsection{Experimental Setup}
\vspace{-0.1in}
The details of our hardware and software platform, hyper-parameters, LLM engine, training details of the black-box \gnn and their accuracies are discussed in App.~\ref{app:experimental_setup} and App.~\ref{app:gnn_training}. Our codebase is shared at  \url{https://github.com/idea-iitd/GnnXemplar.git}.

\textbf{Datasets:} Table~\ref{tab:dataset_statistics} presents the $8$ benchmark datasets we use. 
Wherever available, we adopt the standard train/validation/test splits from PyTorch Geometric or the original data releases, preserving class balance. We train a GAT for TAGCora and GCN for the rest.
\renewcommand{\arraystretch}{1.25}
\begin{table}[t]
  \caption{Dataset statistics. Detailed descriptions of each dataset are in App.\ref{app:dataset_description}}
  \label{tab:dataset_statistics}
  \centering
  \resizebox{\textwidth}{!}{
    \begin{tabular}{lllll  |lllll}
      \toprule
      \multicolumn{5}{c}{\textbf{Homophilous}}  &
      \multicolumn{5}{c}{\textbf{Heterophilous}}\\
      \midrule
      \textbf{Name} & \textbf{\#nodes}  & \textbf{\#edges}  & \textbf{\#features}   & \textbf{\#classes} &
      \textbf{Name} & \textbf{\#nodes}  & \textbf{\#edges}  & \textbf{\#features}   & \textbf{\#classes} \\
      \midrule
      TAGCora \cite{oneforall}       & $2,708$   & $10,556$  & 1433  & 7 &
      Amazon-ratings \cite{criticallook} & $24,492$  & $93,050$  & 300   & 5 \\
      
      Citeseer \cite{citeseer}      & $3,327$   & $9,104$   & 3703  & 6 &
      Minesweeper \cite{criticallook}   & $10,000$  & $39,402$  & 7     & 2 \\
  
      WikiCS \cite{wikics}        & $11,701$  & $216,123$ & 300   & 10 &
      Questions  \cite{criticallook} & $48,921$  & $153,540$ & 301   & 2 \\
  
      ogbn-arxiv \cite{ogb}    & $169,343$ & $1,166,243$ & 128 & 40 &
      BA-Shapes \cite{gnnexplainer}     & $300$     & $4110$         & 0     &   4 \\
  
      \bottomrule
    \end{tabular}
  }
  \vspace{-0.2in}
\end{table}

\textbf{Baselines:} As discussed in \S~\ref{sec:introduction_and_related_works}, there are no existing explainers designed to handle node classification on large graphs. Hence, we adapt state-of-the-art explainers originally developed for graph classification. To enable a fair comparison, we reframe the node classification task as a graph classification problem by extracting the $\ell$-hop subgraph around each target node and assigning the node’s label to the entire subgraph; $\ell$ denotes the number of layers in the \gnn. We compare against the following baselines:
\vspace{-0.05in}
\begin{itemize}
  \item \textbf{GNNInterpreter}~\cite{gnninterpreter}: A generative model trained to synthesize class-representative graphs using reinforcement learning.
  \item \textbf{GCNeuron}~\cite{gcneuron}: A neuron-level explainer that identifies high-level subgraph concepts that activate neurons within a \gnn.
  \item \textbf{GLGExplainer}~\cite{glgexplainer}: The only prior global logical explainer that fits a Boolean formula to GNN outputs by clustering local explanation subgraphs (e.g., PGExplainer~\cite{pgexplainer}) around prototypes and learning a formula over those prototypes using ELEN~\cite{elen}.
\end{itemize}
Since both GNNInterpreter and GCNeuron identify representative subgraphs without generating explicit Boolean rules, we construct a rule by taking a logical OR over all identified subgraphs for each class label. We do not compare with GraphTrail~\cite{armgaan2024graphtrail} since it only considers graphs with discrete node labels. \textsc{Mage}~\cite{mage} is also omitted since it is specific to molecular graphs.

\textbf{Metrics:} We evaluate each explainer by applying its generated Boolean formula to the test set of each dataset. The alignment between the formula and the \gnn's predictions is quantified using the following metrics: \textit{Fidelity} (the proportion of test nodes where the formula's output matches the GNN's prediction), \textit{Precision}, \textit{Recall}, and \textit{F1-score}.
%
%
\subsection{Fidelity}
\label{sec:fidelity}
\vspace{-0.1in}
 Table~\ref{tab:fidelity} reports the fidelity of each explainer on node classification tasks. Precision, recall and F-score for the same experiments are reported in Tables~\ref{tab:precision}, ~\ref{tab:recall} and ~\ref{tab:f1} in the Appendix. Several key observations emerge. \textbf{(1)} \name consistently achieves the highest fidelity, establishing a new benchmark for global explanation in node classification. \textbf{(2)} Explainers for graph classification demonstrate brittle performance when adapted to  node classification, confirming that exact subgraph repetition is rare in real-world graphs. \textbf{(3)} motif-based explainers reliant on subgraph isomorphism fails to scale on large graphs. We elaborate below.

 \textbf{NAs:} The NA cases occur when an explanation is generated but cannot be applied or evaluated on any specific node (or neighborhood subgraph) instance. This limitation arises from the reliance of GNNInterpreter on subgraph-based rules. Its only mode of application is through subgraph isomorphism, which severely constrains its generalizability. Subgraph isomorphism can only function when node features are either absent or purely discrete (e.g., scalars). With continuous or high-dimensional node features, determining whether two nodes, or their neighborhoods, are ``identical'' becomes ill-defined, rendering the explanation unusable in most real-world settings. 

\textbf{NFs:} NF stands for ``No Formula Generated,'' which is encountered in GLGExplainer. GLGExplainer first trains a surrogate model to mimic the black-box \gnn and then attempts to distill this surrogate into interpretable boolean logic. However, we find that in several datasets, either the surrogate model fails to approximate the GNN, or the boolean distillation step breaks down. In both scenarios, GLGExplainer outputs an empty string. This failure indicates that GLGExplainer is brittle when faced with noisy, irregular, or highly entangled graph patterns.

\textbf{Inferior Baseline Performance.} Even when the baselines produce actionable rules, all three yield significantly lower fidelity. This is primarily because they rely on identifying common subgraph patterns per class, an assumption that rarely holds in complex, real-world graphs. In contrast, \name is grounded in exemplar theory and computes distributional distances to strategically selected exemplars in the \gnn embedding space. This approach avoids subgraph isomorphism while remaining faithfully aligned with the \gnn's predictive behavior.
%
%
\textbf{Scalability:} Both \textsc{GLGExplainer} and \textsc{GCNeuron} frequently encounter out‐of‐memory (OOM) errors on large or dense graphs, with \textsc{GLGExplainer} failing because of its reliance on subgraph‐isomorphism and \textsc{GCNeuron} due to exhaustive neuron‐activation pattern enumeration. In contrast, \name is explicitly designed to avoid subgraph isomorphism and scales gracefully.
%
%
\renewcommand{\arraystretch}{1.5}
\begin{table}[t]
  \caption{Fidelity of the \name and the various baselines. OOM indicates out-of-memory. NA indicates ``Not Applicable'', and NF indicates the case where the algorithm failed to generate a formula. GNNInterpreter assumes discrete node attributes only, and hence, for datasets with continuous attributed notes, it is not applicable.}
  \label{tab:fidelity}
  \centering
  \resizebox{\textwidth}{!}{
    \begin{tabular}{lllll  |lllll}
      \toprule
      
      \multicolumn{5}{c}{\textbf{Homophilous}}      &
      \multicolumn{4}{c}{\textbf{Heterophilous}}    \\
      
      \midrule
      
                              & \textbf{TAGCora}        & \textbf{Citeseer} & \textbf{WikiCS}       & \textbf{arxiv} &
      \textbf{Amazon-R}  & \textbf{Questions}& \textbf{Minesweeper}  & \textbf{BA-Shapes} \\

      \midrule

      \textbf{GNNInterpreter}         & NA   & $0.50 \pm 0.0$    & NA    & NA &
      NA           & NA    & $0.50 \pm 0.0$    & $0.47 \pm 0.0$ \\

      \textbf{GCNeuron}& $0.51 \pm 0.0$   & $0.50 \pm 0.0$    & OOM    & OOM &
      $0.56 \pm 0.0$        & OOM    & $0.54 \pm 0.0$    & $0.50 \pm 0.0$ \\

    \textbf{GLGExplainer} & NF & NF & OOM & OOM &
      NF & OOM & $0.22 \pm 0.07$ & $0.30 \pm 0.09$ \\

      \midrule

      \textbf{\name} 
      & \boldmath{$0.83 \pm 0.01$} 
      & \boldmath{$0.92 \pm 0.03$}  & \boldmath{$0.78 \pm 0.01$}   & \boldmath{$0.84 \pm 0.01$}   & \boldmath{$0.82 \pm 0.01$} & \boldmath{$0.92 \pm 0.01$}   & \boldmath{$0.86 \pm 0.02$}   & \boldmath{$0.93 \pm 0.00$} \\
      \bottomrule
    \end{tabular}
  }
  \vspace{-0.27in}
\end{table}
\vspace{-0.1in}
\subsection{Ablation Study}
\label{sec:ablation}
\vspace{-0.1in}
\begin{figure}[b]
\vspace{-0.23in}
  \centering
  \includegraphics[width=4.9in]{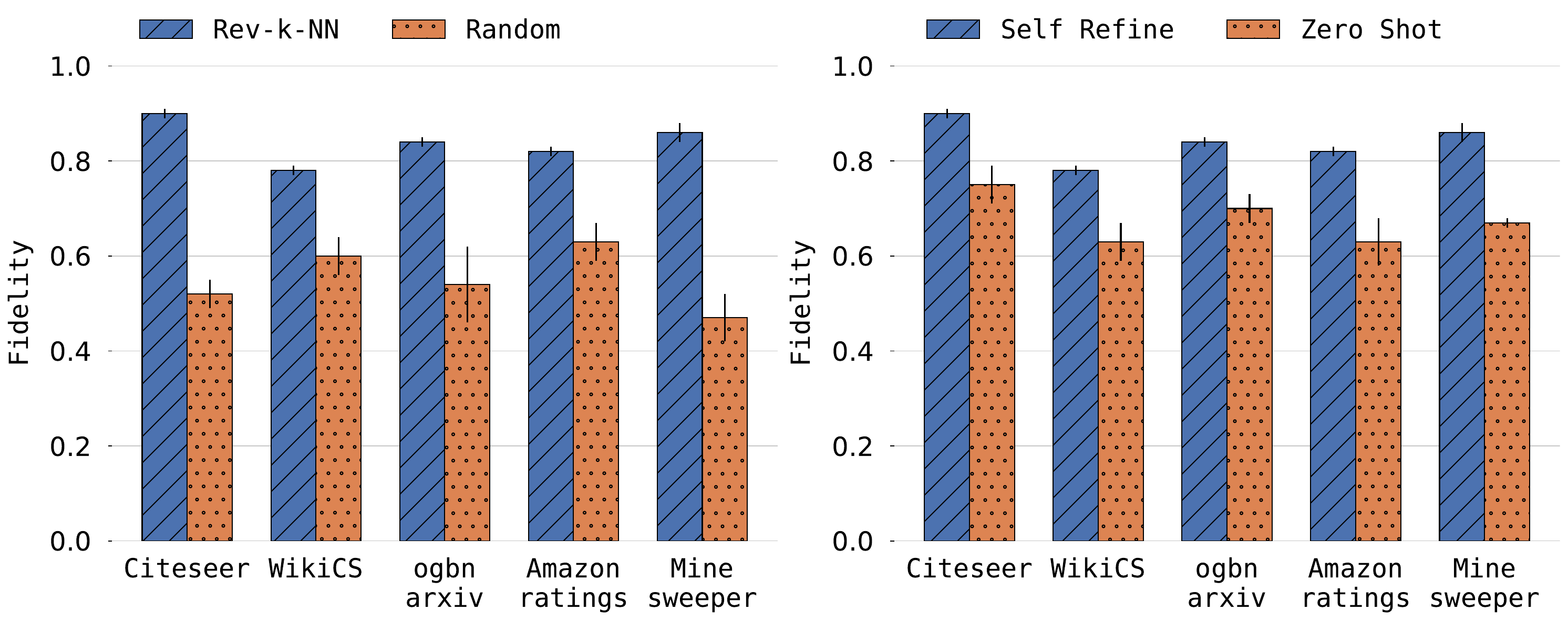}
  \vspace{-0.1in}
  \caption{Ablation study on \name. \textbf{Left:} Impact of \rknn based exemplar selection. \textbf{Right:} Impact of self-refinement strategy vs. zero-shot rule discovery.}
  \label{fig:ablation}
  \vspace{-0.1in}
\end{figure}
To quantify the contributions of our two key innovations, exemplar selection through \rknn and iterative self-refinement using LLM, we conduct two controlled ablations.

\textbf{\rknn vs.\ Random Exemplar Selection:}  
Here, we replace our \rknn based exemplar selection with an equal-sized set of randomly sampled nodes (exemplars) per class. This change yields a noticeable fidelity decrease, as random prototypes fail to capture the dense, semantically coherent neighborhoods that \rknn provides (see Fig. \ref{fig:ablation}-Left subplot).\\
\textbf{Self-Refinement vs.\ Zero-Shot:}  
In the zero-shot variant, we bypass the structure self-refining pipeline discussed in \S~\ref{sec:signatures} and directly ask the LLM to predict the rule (signature) in one go. Ablation results in Fig.~\ref{fig:ablation} (Right subplot) demonstrate that this zero-shot approach yields noticeably lower fidelity and higher variance on every dataset. This reveals that the full self-refinement pipeline systematically identifies misclassified instances and adjusts the rules accordingly, producing significantly more accurate and robust explanations with reduced variance across runs.  
\vspace{-0.15in}
\subsection{Impact of Parameters and \gnn architectures}
\label{sec:params}
\vspace{-0.1in}
App.~\ref{app:impact_of_parameters} studies the impact of parameters \(k\) in \rknn,  sample size in \rknn{} approximation, and the training‐set sample size in self-refine. App.~\ref{app:robustness_architectures} evaluates robustness to \gnn architectures.


\vspace{-0.1in}
\subsection{Case Study: Visual Analysis of Rules}
\label{sec:qualitative}
\vspace{-0.1in}
To demonstrate the interpretability and domain‐alignment of our global explanations, we show representative rule sets from TAGCora (homophilous) (Fig.~\ref{fig:global_explanations_tagcora}) and BAShapes (heterphilous) (Fig.~\ref{fig:global_explanations_bashapes}).

\textbf{TAGCora (Fig.~\ref{fig:global_explanations_tagcora})}  
In the TAGCora citation network, topic (class) assignments are driven by both textual features and neighborhood context. For instance, \emph{Neural Networks} papers are identified by the presence of domain‐specific terms like ``backpropagation'' and ``activation functions''. Furthermore, the rule also discovers homophillic associations with other ``Neural Network'' papers among $1$ and $2$-hop neighbors. The rule further comments that the presence of ICA and HMM words in the abstracts of the neighbors further reinforces the class assignment.  

\textbf{BAShapes (Fig.~\ref{fig:global_explanations_bashapes})}  
On the synthetic BAShapes graph, membership in each geometric role is determined solely by local connectivity patterns. \name is successfully able to identify these purely structural and heterophilic associations, such as nodes belonging to the class ``not in a house'' have at least six ``non‐house'' neighbors and minimal ties to other roles.  

\vspace{-0.1in}
\subsection{Application: Diagnostic Power in Failure Cases}
\label{sec:failure}
\vspace{-0.1in}
In Questions, active users exhibit homophilous behavior--they tend to connect to other active users. Inactive users, in contrast, behave heterophilously by connecting primarily to active users. Interestingly, our explanation rule for the inactive class does not reflect this expected heterophily. Instead, it points toward a homophilous pattern even for inactive users. At first glance, this seems contradictory: how can an explanation be so seemingly incorrect and still achieve high fidelity (see Table~\ref{tab:fidelity})?

To investigate, we visualized 2-hop neighborhoods of inactive nodes: once with ground-truth labels, and once with \gnn predictions (see example in Fig. \ref{fig:questions_inactive}). The results were revealing: the \gnn predicts even the neighbors of inactive nodes as inactive, indicating that it has learned an incorrect homophilous pattern for the inactive class as well. This behavior is likely driven by the dataset’s class imbalance --- most nodes are active, so homophily becomes a statistically advantageous shortcut for the model.

Far from being flawed, the explanation is insightful--it does not mimic the ground truth, but rather points to the root cause of why the inactive class is often misclassified with a low precision of $0.68$: the \gnn has failed to capture the intended heterophilous behavior.\looseness=-1
\vspace{-0.1in}
\subsection{Human Evaluation: User Survey}
\label{sec:survey}
\vspace{-0.1in}
\begin{table}[t]
\vspace{-0.4in}
\caption{Results of one-sided binomial tests assessing user preference for text-based explanations over subgraph-based explanations across five A/B test questions. Bolded $p$-values indicate a statistically significant preference for the text modality $(p < 0.05)$.}
\label{tab:survey}
\centering
\scalebox{0.75}{
\begin{tabular}{llllll|l}
  \toprule
       & \textbf{Q1} & \textbf{Q2} & \textbf{Q3} & \textbf{Q4} & \textbf{Q5} & \textbf{Total} \\
  \midrule
  \textbf{Dataset} & TAGCora & Questions & Questions & BAShapes & BAShapes & -- \\
  \textbf{Prefer text} & $43$& $41$& $36$& $42$& $38$& $200$\\
  \textbf{Prefer subgraph} & $17$& $19$& $24$& $18$& $22$& $100$\\
  \textbf{$p$-value} & \boldmath{$0.0005$}& \boldmath{$0.0031$}& $0.0775$& \boldmath{$0.0013$} & \boldmath{$0.0259$}& \boldmath{$<0.0001$} \\
  \bottomrule
\end{tabular}}
\vspace{-0.2in}
\end{table}
In this work, we advocate a shift from conventional subgraph-based visualizations to natural language explanations. Our motivation stems from the hypothesis that presenting dense graph neighborhoods with high-dimensional node attributes in full detail can overwhelm human cognitive capacity. To test this, we conducted a user study with $60$ participants, each responding to $5$ A/B test questions, resulting in $300$ total comparisons. Each question presented the same explanation for a model prediction in both textual and subgraph form, and participants were asked to indicate their preferred modality. See App.~\ref{app:survey} for survey design details and example screenshots.
%

We analyzed the results using the \textit{Binomial test}~\cite{sheskin2003handbook} and \textit{McNemar’s test}~\cite{mcnemar}. The Binomial test, applied to the aggregate responses $(200/300)$, evaluates whether one modality was preferred significantly more often overall. This yielded a highly significant result ($p$-value $< 0.0001$), indicating that text-based explanations were favored by participants at the population level. Table~\ref{tab:survey} presents the per-question binomial test results, showing that this preference was consistent and statistically significant in all except Q3. While Q3 does not reach statistical significance, this was by design: we deliberately selected an explanation where the subgraph visualization was small and simple. The goal was to verify that participants were not blindly preferring the text modality due to any prior bias.\looseness=-1

To assess within-participant consistency, we additionally conducted McNemar’s test using the `exact Binomial' method. Out of $60$ participants, $45$ chose text more often, while only $15$ preferred subgraph more often. This difference was also statistically significant ($p$-value $= 0.0001$), confirming that the preference for text was not only strong but consistent across individuals.

\vspace{-0.1in}
\section{Conclusions, Limitations and Future Works}
\label{sec:conclusion}
\vspace{-0.1in}
In this work, we presented \name, a novel framework for global explanation of node classification in \gnns. Unlike prior explainers that rely on brittle motif discovery and struggle with scale, \name explains through exemplar nodes, which serve as semantic anchors for model behavior. To generate human-understandable explanations, we leveraged LLMs to distill the defining characteristics of each exemplar and its neighborhood into concise natural language rules. Our empirical evaluations demonstrate that \name outperforms existing methods in fidelity, scalability, and user interpretability, with a user study confirming strong preference for textual explanations.\looseness=-1

\textbf{Limitations and Future Works:} \name operates in a setting with access only to the \gnn's embedding space, limiting visibility into how  feature-topology interactions influence internal activations. As a result, fine-grained mechanistic understanding of the model's decision process remains out of reach. We plan to study this aspect through the lens of mechanistic interpretability.\looseness=-1

\clearpage
\section*{Acknowledgements}
The collaborative research with IIT Delhi was supported by Fujitsu Research of India. Burouj Armgaan also acknowledges support from Prime Minister’s Research fellowship during this research and the generous travel grant received from ACM/IARCS to support his participation in NeurIPS 2025.

\bibliographystyle{plain}
\bibliography{references}

\begin{thebibliography}{10}

\bibitem{armgaan2024graphtrail}
Burouj Armgaan, Manthan Dalmia, Sourav Medya, and Sayan Ranu.
\newblock Graphtrail: Translating gnn predictions into human-interpretable logical rules.
\newblock {\em Advances in Neural Information Processing Systems}, 37:123443--123470, 2024.

\bibitem{glgexplainer}
Steve Azzolin, Antonio Longa, Pietro Barbiero, Pietro Lio, and Andrea Passerini.
\newblock Global explainability of {GNN}s via logic combination of learned concepts.
\newblock In {\em The Eleventh International Conference on Learning Representations}, 2023.

\bibitem{guided-bp}
Federico Baldassarre and Hossein Azizpour.
\newblock Explainability techniques for graph convolutional networks.
\newblock {\em arXiv preprint arXiv:1905.13686}, 2019.

\bibitem{elen}
Pietro Barbiero, Gabriele Ciravegna, Francesco Giannini, Pietro Li{\'o}, Marco Gori, and Stefano Melacci.
\newblock Entropy-based logic explanations of neural networks.
\newblock In {\em Proceedings of the AAAI Conference on Artificial Intelligence}, 2022.

\bibitem{setcover}
Thomas~H. Cormen, Charles~E. Leiserson, Ronald~L. Rivest, and Clifford Stein.
\newblock {\em Introduction to Algorithms, Third Edition}.
\newblock The MIT Press, 3rd edition, 2009.

\bibitem{dai2025how}
Xinnan Dai, Haohao Qu, Yifei Shen, Bohang Zhang, Qihao Wen, Wenqi Fan, Dongsheng Li, Jiliang Tang, and Caihua Shan.
\newblock How do large language models understand graph patterns? a benchmark for graph pattern comprehension.
\newblock In {\em The Thirteenth International Conference on Learning Representations}, 2025.

\bibitem{graphsvx}
Alexandre Duval and Fragkiskos~D Malliaros.
\newblock Graphsvx: Shapley value explanations for graph neural networks.
\newblock In {\em Machine Learning and Knowledge Discovery in Databases. Research Track: European Conference, ECML PKDD 2021, Bilbao, Spain, September 13--17, 2021, Proceedings, Part II 21}, pages 302--318. Springer, 2021.

\bibitem{degree}
Qizhang Feng, Ninghao Liu, Fan Yang, Ruixiang Tang, Mengnan Du, and Xia Hu.
\newblock Degree: Decomposition based explanation for graph neural networks.
\newblock In {\em International Conference on Learning Representations}, 2022.

\bibitem{zorro}
Thorben Funke, Megha Khosla, Mandeep Rathee, and Avishek Anand.
\newblock Z orro: Valid, sparse, and stable explanations in graph neural networks.
\newblock {\em IEEE Transactions on Knowledge and Data Engineering}, 2022.

\bibitem{citeseer}
C~Lee Giles, Kurt~D Bollacker, and Steve Lawrence.
\newblock Citeseer: An automatic citation indexing system.
\newblock In {\em Proceedings of the third ACM conference on Digital libraries}, pages 89--98, 1998.

\bibitem{triper}
Anjali Gupta, Prashant Kumar, Aniket Mishra, Abhishek Singh, Surender Kumar, Muthusamy Chelliah, Abhijnan Chakraborty, and Sayan Ranu.
\newblock Persona identification in e-commerce with scarce labels and in-context graph learning.
\newblock In {\em Proceedings of the 31st ACM SIGKDD Conference on Knowledge Discovery and Data Mining V. 2}, pages 778--789, 2025.

\bibitem{bonsai}
Mridul Gupta, Samyak Jain, Vansh Ramani, Hariprasad Kodamana, and Sayan Ranu.
\newblock Bonsai: Gradient-free graph condensation for node classification.
\newblock In {\em ICLR}, 2025.

\bibitem{graphsage}
William~L. Hamilton, Rex Ying, and Jure Leskovec.
\newblock Inductive representation learning on large graphs.
\newblock In {\em Proceedings of the 31st International Conference on Neural Information Processing Systems}, NIPS'17, page 1025–1035, Red Hook, NY, USA, 2017. Curran Associates Inc.

\bibitem{ogb}
Weihua Hu, Matthias Fey, Marinka Zitnik, Yuxiao Dong, Hongyu Ren, Bowen Liu, Michele Catasta, and Jure Leskovec.
\newblock Open graph benchmark: Datasets for machine learning on graphs.
\newblock {\em Advances in neural information processing systems}, 33:22118--22133, 2020.

\bibitem{graphlime}
Qiang Huang, Makoto Yamada, Yuan Tian, Dinesh Singh, and Yi~Chang.
\newblock Graphlime: Local interpretable model explanations for graph neural networks.
\newblock {\em IEEE Transactions on Knowledge and Data Engineering}, 2022.

\bibitem{gcfexplainer}
Zexi Huang, Mert Kosan, Sourav Medya, Sayan Ranu, and Ambuj Singh.
\newblock Global counterfactual explainer for graph neural networks.
\newblock In {\em Proceedings of the Sixteenth ACM International Conference on Web Search and Data Mining}, pages 141--149, 2023.

\bibitem{gcexpleainer_extended}
Mert Kosan, Zexi Huang, Sourav Medya, Sayan Ranu, and Ambuj Singh.
\newblock Gcfexplainer: Global counterfactual explainer for graph neural networks.
\newblock {\em ACM Trans. Intell. Syst. Technol.}, 16(5), August 2025.

\bibitem{gnnxbench}
Mert Kosan, Samidha Verma, Burouj Armgaan, Khushbu Pahwa, Ambuj Singh, Sourav Medya, and Sayan Ranu.
\newblock {GNNX}-{BENCH}: Unravelling the utility of perturbation-based {GNN} explainers through in-depth benchmarking.
\newblock In {\em The Twelfth International Conference on Learning Representations}, 2024.

\bibitem{gflowexplainer}
Wenqian Li, Yinchuan Li, Zhigang Li, Jianye Hao, and Yan Pang.
\newblock Dag matters! gflownets enhanced explainer for graph neural networks.
\newblock In {\em The Eleventh International Conference on Learning Representations}, 2023.

\bibitem{gem}
Wanyu Lin, Hao Lan, and Baochun Li.
\newblock Generative causal explanations for graph neural networks.
\newblock In {\em International Conference on Machine Learning}, pages 6666--6679. PMLR, 2021.

\bibitem{oneforall}
Hao Liu, Jiarui Feng, Lecheng Kong, Ningyue Liang, Dacheng Tao, Yixin Chen, and Muhan Zhang.
\newblock One for all: Towards training one graph model for all classification tasks.
\newblock In {\em The Twelfth International Conference on Learning Representations}, 2024.

\bibitem{fraud}
Yajing Liu, Zhengya Sun, and Wensheng Zhang.
\newblock Improving fraud detection via hierarchical attention-based graph neural network.
\newblock {\em arXiv preprint arXiv:2202.06096}, 2022.

\bibitem{goat}
Shengyao Lu, Keith~G Mills, Jiao He, Bang Liu, and Di~Niu.
\newblock {GOA}t: Explaining graph neural networks via graph output attribution.
\newblock In {\em The Twelfth International Conference on Learning Representations}, 2024.

\bibitem{pgexplainer}
Dongsheng Luo, Wei Cheng, Dongkuan Xu, Wenchao Yu, Bo~Zong, Haifeng Chen, and Xiang Zhang.
\newblock Parameterized explainer for graph neural network.
\newblock {\em Advances in neural information processing systems}, 33:19620--19631, 2020.

\bibitem{dagexplainer}
Ge~Lv and Lei Chen.
\newblock On data-aware global explainability of graph neural networks.
\newblock {\em Proceedings of the VLDB Endowment}, 16(11):3447--3460, 2023.

\bibitem{selfrefine}
Aman Madaan, Niket Tandon, Prakhar Gupta, Skyler Hallinan, Luyu Gao, Sarah Wiegreffe, Uri Alon, Nouha Dziri, Shrimai Prabhumoye, Yiming Yang, Shashank Gupta, Bodhisattwa~Prasad Majumder, Katherine Hermann, Sean Welleck, Amir Yazdanbakhsh, and Peter Clark.
\newblock Self-refine: Iterative refinement with self-feedback.
\newblock In {\em Thirty-seventh Conference on Neural Information Processing Systems}, 2023.

\bibitem{mcnemar}
Quinn McNemar.
\newblock Note on the sampling error of the difference between correlated proportions or percentages.
\newblock {\em Psychometrika}, 12(2):153--157, 1947.

\bibitem{medin1978context}
Douglas~L Medin and Michael~M Schaffer.
\newblock Context theory of classification learning.
\newblock {\em Psychological Review}, 85(3):207--238, 1978.

\bibitem{wikics}
P{\'e}ter Mernyei and C{\u{a}}t{\u{a}}lina Cangea.
\newblock Wiki-cs: A wikipedia-based benchmark for graph neural networks.
\newblock In {\em The Thirty-Seventh International Conference on Machine Learning}, 2020.

\bibitem{graphreach}
Sunil Nishad, Shubhangi Agarwal, Arnab Bhattacharya, and Sayan Ranu.
\newblock Graphreach: Position-aware graph neural network using reachability estimations.
\newblock {\em IJCAI}, 2021.

\bibitem{dnx}
Tamara Pereira, Erik Nascimento, Lucas~E Resck, Diego Mesquita, and Amauri Souza.
\newblock Distill n’explain: explaining graph neural networks using simple surrogates.
\newblock In {\em International Conference on Artificial Intelligence and Statistics}, pages 6199--6214. PMLR, 2023.

\bibitem{criticallook}
Oleg Platonov, Denis Kuznedelev, Michael Diskin, Artem Babenko, and Liudmila Prokhorenkova.
\newblock A critical look at the evaluation of gnns under heterophily: Are we really making progress?
\newblock In {\em The Eleventh International Conference on Learning Representations}, 2023.

\bibitem{Excitation-BP}
Phillip~E Pope, Soheil Kolouri, Mohammad Rostami, Charles~E Martin, and Heiko Hoffmann.
\newblock Explainability methods for graph convolutional neural networks.
\newblock In {\em Proceedings of the IEEE/CVF conference on computer vision and pattern recognition}, pages 10772--10781, 2019.

\bibitem{Graph-mask}
Michael~Sejr Schlichtkrull, Nicola De~Cao, and Ivan Titov.
\newblock Interpreting graph neural networks for nlp with differentiable edge masking.
\newblock {\em International Conference on Learning Representations}, 2021.

\bibitem{GNN-LRP}
Thomas Schnake, Oliver Eberle, Jonas Lederer, Shinichi Nakajima, Kristof~T Sch{\"u}tt, Klaus-Robert M{\"u}ller, and Gr{\'e}goire Montavon.
\newblock Higher-order explanations of graph neural networks via relevant walks.
\newblock {\em IEEE transactions on pattern analysis and machine intelligence}, 44(11):7581--7596, 2021.

\bibitem{rgexplainer}
Caihua Shan, Yifei Shen, Yao Zhang, Xiang Li, and Dongsheng Li.
\newblock Reinforcement learning enhanced explainer for graph neural networks.
\newblock In {\em NeurIPS 2021}, December 2021.

\bibitem{sheskin2003handbook}
David~J Sheskin.
\newblock {\em Handbook of parametric and nonparametric statistical procedures}.
\newblock Chapman and hall/CRC, 2003.

\bibitem{cff}
Juntao Tan, Shijie Geng, Zuohui Fu, Yingqiang Ge, Shuyuan Xu, Yunqi Li, and Yongfeng Zhang.
\newblock Learning and evaluating graph neural network explanations based on counterfactual and factual reasoning.
\newblock In {\em Proceedings of the ACM Web Conference 2022}, WWW '22, page 1018–1027, 2022.

\bibitem{induce}
Samidha Verma, Burouj Armgaan, Sourav Medya, and Sayan Ranu.
\newblock Indu{CE}: Inductive counterfactual explanations for graph neural networks.
\newblock {\em Transactions on Machine Learning Research}, 2024.

\bibitem{pgm-ex}
Minh Vu and My~T Thai.
\newblock Pgm-explainer: Probabilistic graphical model explanations for graph neural networks.
\newblock {\em Advances in neural information processing systems}, 33:12225--12235, 2020.

\bibitem{ReFine}
Xiang Wang, Yingxin Wu, An~Zhang, Xiangnan He, and Tat-Seng Chua.
\newblock Towards multi-grained explainability for graph neural networks.
\newblock {\em Advances in Neural Information Processing Systems}, 34:18446--18458, 2021.

\bibitem{gnninterpreter}
Xiaoqi Wang and Han~Wei Shen.
\newblock {GNNI}nterpreter: A probabilistic generative model-level explanation for graph neural networks.
\newblock In {\em The Eleventh International Conference on Learning Representations}, 2023.

\bibitem{xie2022task}
Yaochen Xie, Sumeet Katariya, Xianfeng Tang, Edward Huang, Nikhil Rao, Karthik Subbian, and Shuiwang Ji.
\newblock Task-agnostic graph explanations.
\newblock {\em NeurIPS}, 2022.

\bibitem{gin}
Keyulu Xu, Weihua Hu, Jure Leskovec, and Stefanie Jegelka.
\newblock How powerful are graph neural networks?
\newblock In {\em International Conference on Learning Representations}, 2019.

\bibitem{gcneuron}
Han Xuanyuan, Pietro Barbiero, Dobrik Georgiev, Lucie~Charlotte Magister, and Pietro Li{\`o}.
\newblock Global concept-based interpretability for graph neural networks via neuron analysis.
\newblock In {\em Proceedings of the AAAI conference on artificial intelligence}, 37.

\bibitem{xuanyuan2023global}
Han Xuanyuan, Pietro Barbiero, Dobrik Georgiev, Lucie~Charlotte Magister, and Pietro Lió.
\newblock Global concept-based interpretability for graph neural networks via neuron analysis.
\newblock In {\em AAAI}, 2023.

\bibitem{pinsage}
Rex Ying, Ruining He, Kaifeng Chen, Pong Eksombatchai, William~L. Hamilton, and Jure Leskovec.
\newblock Graph convolutional neural networks for web-scale recommender systems.
\newblock In {\em KDD}, page 974–983, 2018.

\bibitem{gnnexplainer}
Zhitao Ying, Dylan Bourgeois, Jiaxuan You, Marinka Zitnik, and Jure Leskovec.
\newblock Gnnexplainer: Generating explanations for graph neural networks.
\newblock {\em Advances in neural information processing systems}, 32, 2019.

\bibitem{math13071147}
Yuxin You, Zhen Liu, Xiangchao Wen, Yongtao Zhang, and Wei Ai.
\newblock Large language models meet graph neural networks: A perspective of graph mining.
\newblock {\em Mathematics}, 13(7), 2025.

\bibitem{mage}
Zhaoning Yu and Hongyang Gao.
\newblock Mage: Model-level graph neural networks explanations via motif-based graph generation.
\newblock In {\em The Thirteenth International Conference on Learning Representations}, 2025.

\bibitem{xgnn}
Hao Yuan, Jiliang Tang, Xia Hu, and Shuiwang Ji.
\newblock Xgnn: Towards model-level explanations of graph neural networks.
\newblock In {\em Proceedings of the 26th ACM SIGKDD International Conference on Knowledge Discovery \& Data Mining}, pages 430--438, 2020.

\bibitem{subgraphx}
Hao Yuan, Haiyang Yu, Jie Wang, Kang Li, and Shuiwang Ji.
\newblock On explainability of graph neural networks via subgraph explorations.
\newblock In {\em ICML}, pages 12241--12252. PMLR, 2021.

\bibitem{gstarx}
Shichang Zhang, Yozen Liu, Neil Shah, and Yizhou Sun.
\newblock Gstarx: Explaining graph neural networks with structure-aware cooperative games.
\newblock In {\em Advances in Neural Information Processing Systems}, 2022.

\bibitem{RELex}
Yue Zhang, David Defazio, and Arti Ramesh.
\newblock Relex: A model-agnostic relational model explainer.
\newblock In {\em Proceedings of the 2021 AAAI/ACM Conference on AI, Ethics, and Society}, pages 1042--1049, 2021.

\bibitem{zhang2024causal}
Yuzhe Zhang, Yipeng Zhang, Yidong Gan, Lina Yao, and Chen Wang.
\newblock Causal graph discovery with retrieval-augmented generation based large language models.
\newblock {\em arXiv preprint arXiv:2402.15301}, 2024.

\end{thebibliography}

\clearpage
\section*{NeurIPS Paper Checklist}
\begin{enumerate}
\item {\bf Claims}
    \item[] Question: Do the main claims made in the abstract and introduction accurately reflect the paper's contributions and scope?
    \item[] Answer: \answerYes{} 
    \item[] Justification: Sec. \ref{sec:formulation}, \ref{sec:methodology}, \ref{sec:experiments}
    \item[] Guidelines:
    \begin{itemize}
        \item The answer NA means that the abstract and introduction do not include the claims made in the paper.
        \item The abstract and/or introduction should clearly state the claims made, including the contributions made in the paper and important assumptions and limitations. A No or NA answer to this question will not be perceived well by the reviewers. 
        \item The claims made should match theoretical and experimental results, and reflect how much the results can be expected to generalize to other settings. 
        \item It is fine to include aspirational goals as motivation as long as it is clear that these goals are not attained by the paper. 
    \end{itemize}

\item {\bf Limitations}
    \item[] Question: Does the paper discuss the limitations of the work performed by the authors?
    \item[] Answer: \answerYes{} 
    \item[] Justification: Sec. \ref{sec:conclusion}
    \item[] Guidelines:
    \begin{itemize}
        \item The answer NA means that the paper has no limitation while the answer No means that the paper has limitations, but those are not discussed in the paper. 
        \item The authors are encouraged to create a separate ``Limitations'' section in their paper.
        \item The paper should point out any strong assumptions and how robust the results are to violations of these assumptions (e.g., independence assumptions, noiseless settings, model well-specification, asymptotic approximations only holding locally). The authors should reflect on how these assumptions might be violated in practice and what the implications would be.
        \item The authors should reflect on the scope of the claims made, e.g., if the approach was only tested on a few datasets or with a few runs. In general, empirical results often depend on implicit assumptions, which should be articulated.
        \item The authors should reflect on the factors that influence the performance of the approach. For example, a facial recognition algorithm may perform poorly when image resolution is low or images are taken in low lighting. Or a speech-to-text system might not be used reliably to provide closed captions for online lectures because it fails to handle technical jargon.
        \item The authors should discuss the computational efficiency of the proposed algorithms and how they scale with dataset size.
        \item If applicable, the authors should discuss possible limitations of their approach to address problems of privacy and fairness.
        \item While the authors might fear that complete honesty about limitations might be used by reviewers as grounds for rejection, a worse outcome might be that reviewers discover limitations that aren't acknowledged in the paper. The authors should use their best judgment and recognize that individual actions in favor of transparency play an important role in developing norms that preserve the integrity of the community. Reviewers will be specifically instructed to not penalize honesty concerning limitations.
    \end{itemize}

\item {\bf Theory assumptions and proofs}
    \item[] Question: For each theoretical result, does the paper provide the full set of assumptions and a complete (and correct) proof?
    \item[] Answer: \answerYes{} 
    \item[] Justification: App. \ref{thm:rknn_gnn}, \ref{thm:nphard_gnn}, \ref{thm:submodular_gnn}
    \item[] Guidelines:
    \begin{itemize}
        \item The answer NA means that the paper does not include theoretical results. 
        \item All the theorems, formulas, and proofs in the paper should be numbered and cross-referenced.
        \item All assumptions should be clearly stated or referenced in the statement of any theorems.
        \item The proofs can either appear in the main paper or the supplemental material, but if they appear in the supplemental material, the authors are encouraged to provide a short proof sketch to provide intuition. 
        \item Inversely, any informal proof provided in the core of the paper should be complemented by formal proofs provided in appendix or supplemental material.
        \item Theorems and Lemmas that the proof relies upon should be properly referenced. 
    \end{itemize}

    \item {\bf Experimental result reproducibility}
    \item[] Question: Does the paper fully disclose all the information needed to reproduce the main experimental results of the paper to the extent that it affects the main claims and/or conclusions of the paper (regardless of whether the code and data are provided or not)?
    \item[] Answer: \answerYes{} 
    \item[] Justification: App. \ref{app:experimental_setup}
    \item[] Guidelines:
    \begin{itemize}
        \item The answer NA means that the paper does not include experiments.
        \item If the paper includes experiments, a No answer to this question will not be perceived well by the reviewers: Making the paper reproducible is important, regardless of whether the code and data are provided or not.
        \item If the contribution is a dataset and/or model, the authors should describe the steps taken to make their results reproducible or verifiable. 
        \item Depending on the contribution, reproducibility can be accomplished in various ways. For example, if the contribution is a novel architecture, describing the architecture fully might suffice, or if the contribution is a specific model and empirical evaluation, it may be necessary to either make it possible for others to replicate the model with the same dataset, or provide access to the model. In general. releasing code and data is often one good way to accomplish this, but reproducibility can also be provided via detailed instructions for how to replicate the results, access to a hosted model (e.g., in the case of a large language model), releasing of a model checkpoint, or other means that are appropriate to the research performed.
        \item While NeurIPS does not require releasing code, the conference does require all submissions to provide some reasonable avenue for reproducibility, which may depend on the nature of the contribution. For example
        \begin{enumerate}
            \item If the contribution is primarily a new algorithm, the paper should make it clear how to reproduce that algorithm.
            \item If the contribution is primarily a new model architecture, the paper should describe the architecture clearly and fully.
            \item If the contribution is a new model (e.g., a large language model), then there should either be a way to access this model for reproducing the results or a way to reproduce the model (e.g., with an open-source dataset or instructions for how to construct the dataset).
            \item We recognize that reproducibility may be tricky in some cases, in which case authors are welcome to describe the particular way they provide for reproducibility. In the case of closed-source models, it may be that access to the model is limited in some way (e.g., to registered users), but it should be possible for other researchers to have some path to reproducing or verifying the results.
        \end{enumerate}
    \end{itemize}

\item {\bf Open access to data and code}
    \item[] Question: Does the paper provide open access to the data and code, with sufficient instructions to faithfully reproduce the main experimental results, as described in supplemental material?
    \item[] Answer: \answerYes{} 
    \item[] Justification: Sec. \ref{sec:experiments}
    \item[] Guidelines:
    \begin{itemize}
        \item The answer NA means that paper does not include experiments requiring code.
        \item Please see the NeurIPS code and data submission guidelines (\url{https://nips.cc/public/guides/CodeSubmissionPolicy}) for more details.
        \item While we encourage the release of code and data, we understand that this might not be possible, so “No” is an acceptable answer. Papers cannot be rejected simply for not including code, unless this is central to the contribution (e.g., for a new open-source benchmark).
        \item The instructions should contain the exact command and environment needed to run to reproduce the results. See the NeurIPS code and data submission guidelines (\url{https://nips.cc/public/guides/CodeSubmissionPolicy}) for more details.
        \item The authors should provide instructions on data access and preparation, including how to access the raw data, preprocessed data, intermediate data, and generated data, etc.
        \item The authors should provide scripts to reproduce all experimental results for the new proposed method and baselines. If only a subset of experiments are reproducible, they should state which ones are omitted from the script and why.
        \item At submission time, to preserve anonymity, the authors should release anonymized versions (if applicable).
        \item Providing as much information as possible in supplemental material (appended to the paper) is recommended, but including URLs to data and code is permitted.
    \end{itemize}

\item {\bf Experimental setting/details}
    \item[] Question: Does the paper specify all the training and test details (e.g., data splits, hyperparameters, how they were chosen, type of optimizer, etc.) necessary to understand the results?
    \item[] Answer: \answerYes{} 
    \item[] Justification: App. \ref{app:experimental_setup}
    \item[] Guidelines:
    \begin{itemize}
        \item The answer NA means that the paper does not include experiments.
        \item The experimental setting should be presented in the core of the paper to a level of detail that is necessary to appreciate the results and make sense of them.
        \item The full details can be provided either with the code, in appendix, or as supplemental material.
    \end{itemize}

\item {\bf Experiment statistical significance}
    \item[] Question: Does the paper report error bars suitably and correctly defined or other appropriate information about the statistical significance of the experiments?
    \item[] Answer: \answerYes{} 
    \item[] Justification: Tab. \ref{tab:fidelity}, \ref{tab:survey}, \ref{tab:precision}, \ref{tab:recall}, \ref{tab:f1}. Fig. \ref{fig:ablation}
    \item[] Guidelines:
    \begin{itemize}
        \item The answer NA means that the paper does not include experiments.
        \item The authors should answer ``Yes'' if the results are accompanied by error bars, confidence intervals, or statistical significance tests, at least for the experiments that support the main claims of the paper.
        \item The factors of variability that the error bars are capturing should be clearly stated (for example, train/test split, initialization, random drawing of some parameter, or overall run with given experimental conditions).
        \item The method for calculating the error bars should be explained (closed form formula, call to a library function, bootstrap, etc.)
        \item The assumptions made should be given (e.g., Normally distributed errors).
        \item It should be clear whether the error bar is the standard deviation or the standard error of the mean.
        \item It is OK to report 1-sigma error bars, but one should state it. The authors should preferably report a 2-sigma error bar than state that they have a 96\% CI, if the hypothesis of Normality of errors is not verified.
        \item For asymmetric distributions, the authors should be careful not to show in tables or figures symmetric error bars that would yield results that are out of range (e.g. negative error rates).
        \item If error bars are reported in tables or plots, The authors should explain in the text how they were calculated and reference the corresponding figures or tables in the text.
    \end{itemize}

\item {\bf Experiments compute resources}
    \item[] Question: For each experiment, does the paper provide sufficient information on the computer resources (type of compute workers, memory, time of execution) needed to reproduce the experiments?
    \item[] Answer: \answerYes{} 
    \item[] Justification: App. \ref{app:experimental_setup}
    \item[] Guidelines:
    \begin{itemize}
        \item The answer NA means that the paper does not include experiments.
        \item The paper should indicate the type of compute workers CPU or GPU, internal cluster, or cloud provider, including relevant memory and storage.
        \item The paper should provide the amount of compute required for each of the individual experimental runs as well as estimate the total compute. 
        \item The paper should disclose whether the full research project required more compute than the experiments reported in the paper (e.g., preliminary or failed experiments that didn't make it into the paper). 
    \end{itemize}
    
\item {\bf Code of ethics}
    \item[] Question: Does the research conducted in the paper conform, in every respect, with the NeurIPS Code of Ethics \url{https://neurips.cc/public/EthicsGuidelines}?
    \item[] Answer: \answerYes{} 
    \item[] Justification: All sections.
    \item[] Guidelines:
    \begin{itemize}
        \item The answer NA means that the authors have not reviewed the NeurIPS Code of Ethics.
        \item If the authors answer No, they should explain the special circumstances that require a deviation from the Code of Ethics.
        \item The authors should make sure to preserve anonymity (e.g., if there is a special consideration due to laws or regulations in their jurisdiction).
    \end{itemize}

\item {\bf Broader impacts}
    \item[] Question: Does the paper discuss both potential positive societal impacts and negative societal impacts of the work performed?
    \item[] Answer: \answerYes{} 
    \item[] Justification: Sec. \ref{sec:introduction_and_related_works}
    \item[] Guidelines:
    \begin{itemize}
        \item The answer NA means that there is no societal impact of the work performed.
        \item If the authors answer NA or No, they should explain why their work has no societal impact or why the paper does not address societal impact.
        \item Examples of negative societal impacts include potential malicious or unintended uses (e.g., disinformation, generating fake profiles, surveillance), fairness considerations (e.g., deployment of technologies that could make decisions that unfairly impact specific groups), privacy considerations, and security considerations.
        \item The conference expects that many papers will be foundational research and not tied to particular applications, let alone deployments. However, if there is a direct path to any negative applications, the authors should point it out. For example, it is legitimate to point out that an improvement in the quality of generative models could be used to generate deepfakes for disinformation. On the other hand, it is not needed to point out that a generic algorithm for optimizing neural networks could enable people to train models that generate Deepfakes faster.
        \item The authors should consider possible harms that could arise when the technology is being used as intended and functioning correctly, harms that could arise when the technology is being used as intended but gives incorrect results, and harms following from (intentional or unintentional) misuse of the technology.
        \item If there are negative societal impacts, the authors could also discuss possible mitigation strategies (e.g., gated release of models, providing defenses in addition to attacks, mechanisms for monitoring misuse, mechanisms to monitor how a system learns from feedback over time, improving the efficiency and accessibility of ML).
    \end{itemize}
    
\item {\bf Safeguards}
    \item[] Question: Does the paper describe safeguards that have been put in place for responsible release of data or models that have a high risk for misuse (e.g., pretrained language models, image generators, or scraped datasets)?
    \item[] Answer: \answerNA{} 
    \item[] Justification: \answerNA{}
    \item[] Guidelines:
    \begin{itemize}
        \item The answer NA means that the paper poses no such risks.
        \item Released models that have a high risk for misuse or dual-use should be released with necessary safeguards to allow for controlled use of the model, for example by requiring that users adhere to usage guidelines or restrictions to access the model or implementing safety filters. 
        \item Datasets that have been scraped from the Internet could pose safety risks. The authors should describe how they avoided releasing unsafe images.
        \item We recognize that providing effective safeguards is challenging, and many papers do not require this, but we encourage authors to take this into account and make a best faith effort.
    \end{itemize}

\item {\bf Licenses for existing assets}
    \item[] Question: Are the creators or original owners of assets (e.g., code, data, models), used in the paper, properly credited and are the license and terms of use explicitly mentioned and properly respected?
    \item[] Answer: \answerNA{} 
    \item[] Justification: \answerNA
    \item[] Guidelines:
    \begin{itemize}
        \item The answer NA means that the paper does not use existing assets.
        \item The authors should cite the original paper that produced the code package or dataset.
        \item The authors should state which version of the asset is used and, if possible, include a URL.
        \item The name of the license (e.g., CC-BY 4.0) should be included for each asset.
        \item For scraped data from a particular source (e.g., website), the copyright and terms of service of that source should be provided.
        \item If assets are released, the license, copyright information, and terms of use in the package should be provided. For popular datasets, \url{paperswithcode.com/datasets} has curated licenses for some datasets. Their licensing guide can help determine the license of a dataset.
        \item For existing datasets that are re-packaged, both the original license and the license of the derived asset (if it has changed) should be provided.
        \item If this information is not available online, the authors are encouraged to reach out to the asset's creators.
    \end{itemize}

\item {\bf New assets}
    \item[] Question: Are new assets introduced in the paper well documented and is the documentation provided alongside the assets?
    \item[] Answer: \answerYes{} 
    \item[] Justification: Sec. \ref{sec:experiments}
    \item[] Guidelines:
    \begin{itemize}
        \item The answer NA means that the paper does not release new assets.
        \item Researchers should communicate the details of the dataset/code/model as part of their submissions via structured templates. This includes details about training, license, limitations, etc. 
        \item The paper should discuss whether and how consent was obtained from people whose asset is used.
        \item At submission time, remember to anonymize your assets (if applicable). You can either create an anonymized URL or include an anonymized zip file.
    \end{itemize}

\item {\bf Crowdsourcing and research with human subjects}
    \item[] Question: For crowdsourcing experiments and research with human subjects, does the paper include the full text of instructions given to participants and screenshots, if applicable, as well as details about compensation (if any)? 
    \item[] Answer: \answerYes{} 
    \item[] Justification: Fig. \ref{fig:survey_welcome}, \ref{fig:survey_instructions}, \ref{fig:survey_question}, \ref{fig:survey_ab}
    \item[] Guidelines:
    \begin{itemize}
        \item The answer NA means that the paper does not involve crowdsourcing nor research with human subjects.
        \item Including this information in the supplemental material is fine, but if the main contribution of the paper involves human subjects, then as much detail as possible should be included in the main paper. 
        \item According to the NeurIPS Code of Ethics, workers involved in data collection, curation, or other labor should be paid at least the minimum wage in the country of the data collector. 
    \end{itemize}

\item {\bf Institutional review board (IRB) approvals or equivalent for research with human subjects}
    \item[] Question: Does the paper describe potential risks incurred by study participants, whether such risks were disclosed to the subjects, and whether Institutional Review Board (IRB) approvals (or an equivalent approval/review based on the requirements of your country or institution) were obtained?
    \item[] Answer: \answerNA{} 
    \item[] Justification: This study involved a minimal-risk user survey where participants were not asked to provide any sensitive information. Participation was voluntary and anonymous. Based on institutional policy, IRB approval was not required for this type of research.
    \item[] Guidelines:
    \begin{itemize}
        \item The answer NA means that the paper does not involve crowdsourcing nor research with human subjects.
        \item Depending on the country in which research is conducted, IRB approval (or equivalent) may be required for any human subjects research. If you obtained IRB approval, you should clearly state this in the paper. 
        \item We recognize that the procedures for this may vary significantly between institutions and locations, and we expect authors to adhere to the NeurIPS Code of Ethics and the guidelines for their institution. 
        \item For initial submissions, do not include any information that would break anonymity (if applicable), such as the institution conducting the review.
    \end{itemize}

\item {\bf Declaration of LLM usage}
    \item[] Question: Does the paper describe the usage of LLMs if it is an important, original, or non-standard component of the core methods in this research? Note that if the LLM is used only for writing, editing, or formatting purposes and does not impact the core methodology, scientific rigorousness, or originality of the research, declaration is not required.
    \item[] Answer: \answerYes{} 
    \item[] Justification: Sec. \ref{sec:methodology}
    \item[] Guidelines:
    \begin{itemize}
        \item The answer NA means that the core method development in this research does not involve LLMs as any important, original, or non-standard components.
        \item Please refer to our LLM policy (\url{https://neurips.cc/Conferences/2025/LLM}) for what should or should not be described.
    \end{itemize}

\end{enumerate}

\clearpage
\appendix
\section{Appendix}
\label{sec:appendix}
\begin{figure}[ht]
  \centering
  \includegraphics[width=\textwidth]{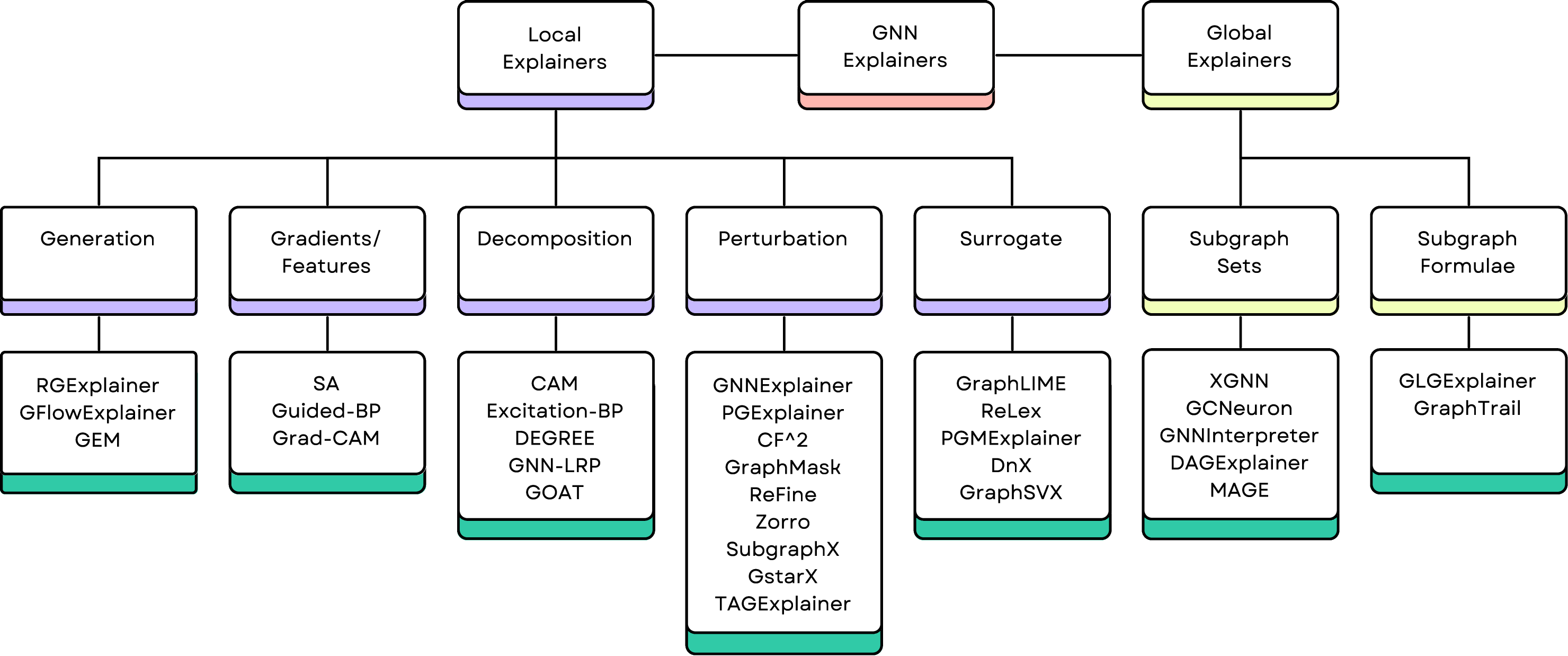}
  \caption{
        Taxonomy of factual GNN explainers.
        \textbf{Generation:} RGExplainer \cite{rgexplainer}, GFlowExplainer \cite{gflowexplainer}, GEM \cite{gem};
        \textbf{Gradient:} SA \cite{guided-bp}, Guided-BP \cite{guided-bp}, Grad-CAM \cite{Excitation-BP};
        \textbf{Decomposition:} CAM \cite{Excitation-BP}, Excitation-BP \cite{Excitation-BP}, DEGREE \cite{degree}, GNN-LRP \cite{GNN-LRP}, GOAT~\cite{goat};
        \textbf{Perturbation:} GNNExplainer~\cite{gnnexplainer}, PGExplainer \cite{pgexplainer}, CF$^2$ \cite{cff}, GraphMask \cite{Graph-mask}, ReFine \cite{ReFine}, Zorro \cite{zorro}, SubgraphX \cite{subgraphx}, GstarX \cite{gstarx}, TAGExplainer~\cite{xie2022task};
        \textbf{Surrogate:} GraphLime \cite{graphlime}, ReLex \cite{RELex}, PGM-Explainer \cite{pgm-ex}, DnX \cite{dnx}, GraphSVX \cite{graphsvx};
        \textbf{Subgraph sets:} XGNN \cite{xgnn}, GCNeuron \cite{xuanyuan2023global}, GNNInterpreter \cite{gnninterpreter}, DAGExplainer \cite{dagexplainer}, MAGE \cite{mage};
        \textbf{Subgraph formulae:} GLGExplainer \cite{glgexplainer}, GraphTrail \cite{armgaan2024graphtrail}.
    }
  \label{fig:tree}
\end{figure}

\subsection{Experimental setup}
\label{app:experimental_setup}
\paragraph{Hardware details}All experiments were performed on Intel(R) Xeon(R) Gold 6426Y: 64 cores, 126 GB RAM, 2 NVIDIA-L40S GPUs, 45 GiB each running on Ubuntu 20.04.6 LTS.

\paragraph{Hyper-parameters} We provide the hyper-parameter values we choose for current pipeline. In sec. \ref{sec:rknn} we fist picked \textbf{$k$ for $k$-NN and Reverse $k$-NN}, for all the datasets in experiment we choose $k = 5$. Next in sec. \ref{sec:maxcover} for coverage maximization instead of selecting number of points we set the \textbf{stopping criterion based on coverage of points}, when the coverage becomes $\ge 95\%$. Then for sec. \ref{sec:signatures} we used \textbf{`gemini-1.5-flash' LLM} from google's generativeai package. Then for sampling positive and negative points which will be used in self-refinement we set the \textbf{sample size $=50$ each}. We split these samples in \textbf{training and validation set in the ratio of $6:4$} respectively. Finally for \textbf{stopping criteria of self-refinement} process we stop the iteration when either the \textbf{accuracy $\ge 95$} or if the number of \textbf{iterations reaches $5$}.

\begin{figure}[h]
    \centering
    \includegraphics{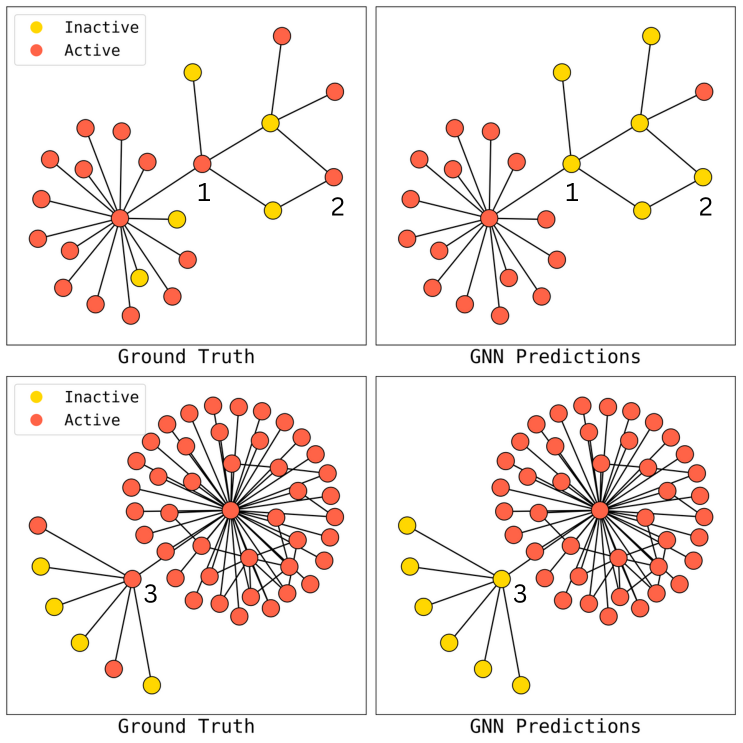}
    \caption{Subgraphs from the Questions dataset. Notice how active nodes 1, 2, and 3 surrounded by inactive nodes are predicted as inactive. This misclassification pattern is the signal that \name picks up when learning the signature for the inactive class.}
    \label{fig:questions_inactive}
\end{figure}

\subsection{Training Details of GNNs}
\label{app:gnn_training}
We first write the details about the GNNs we use for all the datasets,
\begin{itemize}
    \item For TAGCora, we pick the GAT model (with $2$ layers and $64$ hidden dimension) from paper \cite{oneforall} where they officially use this dataset.
    \item For WikiCS dataset we pick the official implementation of GCN used in their paper \cite{wikics}. And use $2$ layer model with a hidden dimension of $64$.
    \item For ogbn-arxiv, we pick the official GCN from \href{https://github.com/snap-stanford/ogb}{\textit{ogb repository}} for node classification on arxiv, we used a $3$ layered version of it with hidden dimension as $256$.
    \item For Citeseer, amazon-ratings, questions and minesweeper we use the same model as use in ogbn-arxiv dataset and for all these datasets we use $3$ layers with hidden dimension of $64$.
    \item For BA-Shapes, we use the model use in the paper \cite{gnnexplainer} with $3$ layers, a hidden dimension of $20$.
\end{itemize}

Since none of the above implementations provide trained model weights so we train the GNNs on these datasets from scratch. For training all the GNNs we use the standard training pipeline of pytorch which resembles closely to what all the above models use in their implementation. We train all the models for $250$ epochs and choose the learning rates from the set $\{0.005, 0.01\}$ with Adam optimizer and a weight decay of $5e-4$. We report the best accuracies of trained GNNs in table \ref{tab:gnn_performance}. Although we try to get as close as possible to the quoted numbers in the datasets official papers there could still be some deviation because of random initialization of model weights.

\begin{table}[h]
  \caption{GNN performance}
  \label{tab:gnn_performance}
  \centering
    \begin{tabular}{ll|ll}
      \toprule
      \multicolumn{2}{c}{\textbf{Homophilous}}  &
      \multicolumn{2}{c}{\textbf{Heterophilous}}\\
      \midrule
      \textbf{Name} & \textbf{Accuracy} &
      \textbf{Name} & \textbf{Accuracy}  \\
      \midrule
      TAGCora       & $73.84$   &
      Amazon-ratings& $46.43$   \\
      
      Citeseer      & $66.20$   &
      Minesweeper   & $80.36$   \\
  
      WikiCS        & $78.40$   &
      Questions     & $97.07$   \\
  
      ogbn-arxiv    & $62.58$   &
      BA-Shapes     & $95.71$   \\
  
      \bottomrule
    \end{tabular}
\end{table}

\subsection{Proof of Lemma~\ref{thm:rknn_gnn}}
\label{app:rknn}

Let $\mathbb{S} \subseteq \CV_{tr}$ be a set of $z$ nodes sampled uniformly at random from the full training node set $\CV_{tr}$. For a fixed node $v \in \CV_{tr}$, define the indicator random variable $X^v_u$ for each $u \in \mathbb{S}$ such that:
\begin{equation}
    X^v_u = 
    \begin{cases}
      1 & \text{if } v \in k\text{-NN}(u) \\
      0 & \text{otherwise}
    \end{cases}
\end{equation}

The total number of times $v$ appears in the $k$-NN lists of sampled nodes is given by:
\begin{equation}
    X^v = \sum_{u \in \mathbb{S}} X^v_u = z \cdot \widetilde{\Pi}(v)
\end{equation}

Since nodes are sampled independently and uniformly, we have $E[X^v] = z \cdot \Pi(v)$, where $\Pi(v)$ is the true representative power of node $v$. Thus, $X^v$ follows a Binomial distribution: $X^v \sim \text{Binomial}(z, \Pi(v))$.

Applying Chernoff bounds for any $\epsilon \geq 0$:
\begin{align}
\text{Upper tail:} \quad & P\left(X^v \geq (1 + \epsilon) z \Pi(v)\right) \leq \exp\left(-\frac{\epsilon^2}{2 + \epsilon} z \Pi(v)\right) \\
\text{Lower tail:} \quad & P\left(X^v \leq (1 - \epsilon) z \Pi(v)\right) \leq \exp\left(-\frac{\epsilon^2}{2} z \Pi(v)\right)
\end{align}

Combining both bounds:
\begin{align}
P\left( \left| X^v - z\Pi(v) \right| \geq \epsilon z \Pi(v) \right)
    &\leq 2\exp\left(-\frac{\epsilon^2}{2 + \epsilon} z \Pi(v)\right) \\
    \Rightarrow P\left( \left| z\widetilde{\Pi}(v) - z\Pi(v) \right| \geq \epsilon z\Pi(v) \right)
    &\leq 2\exp\left(-\frac{\epsilon^2}{2 + \epsilon} z \Pi(v)\right)\\
\Rightarrow P\left( \left| \widetilde{\Pi}(v) - \Pi(v) \right| \geq \epsilon \Pi(v) \right)
    &\leq 2\exp\left(-\frac{\epsilon^2}{2 + \epsilon} z \Pi(v)\right)
    \label{eq:i1_gnn}
\end{align}

Substituting $\theta = \epsilon \Pi(v)$ from Lemma ~\ref{thm:rknn_gnn} in Eq.~\ref{eq:i1_gnn}, we rewrite Eq.~\ref{eq:i1_gnn} as:
\begin{align}
    P\left( \left| \widetilde{\Pi}(v) - \Pi(v) \right| \geq \theta \right) &\leq 2\exp{\left(-\frac{\left(\frac{\theta}{\Pi\left(v\right)}\right)^2}{2+\frac{\theta}{\Pi\left(v\right)}}z\Pi\left(v\right)\right)}\\
    &\leq 2\exp\left(-\frac{\theta^2}{(2\Pi(v) + \theta)} z \right)
\end{align}

Since $\Pi(v) \leq 1$, we obtain:
\begin{equation}
    P\left( \left| \widetilde{\Pi}(v) - \Pi(v) \right| \geq \theta \right)
    \leq 2\exp\left(-\frac{\theta^2}{2 + \theta} z \right)
\end{equation}

To ensure this probability is at most $\delta$, we solve:
\begin{align}
    & 2\exp\left(-\frac{\theta^2}{2 + \theta} z \right) \leq \delta \\
    \Rightarrow & \ln\left(\frac{2}{\delta}\right) \leq \frac{\theta^2}{2 + \theta} z \\
    \Rightarrow & z \geq \frac{\ln\left(\frac{2}{\delta}\right)(2 + \theta)}{\theta^2}
\end{align}

Hence, if we sample at least $z \geq \frac{\ln\left(\frac{2}{\delta}\right)(2 + \theta)}{\theta^2}$ nodes in $\mathbb{S}$, then for any $v \in \CV$:
\begin{equation}
    \widetilde{\Pi}(v) \in \left[ \Pi(v) - \theta,\; \Pi(v) + \theta \right] \quad \text{with probability at least } 1 - \delta. \hspace{0.5in} \square
\end{equation}

\subsection{NP-hardness: Proof of Theorem~\ref{thm:nphard_gnn}}
\label{app:nphard_gnn}
\textsc{Proof.} We prove NP-hardness of the representative node selection problem by a reduction from the classical \textit{Set Cover} problem~\citep{setcover}.

\begin{defn}[Set Cover]
Given a budget $b$ and a collection of subsets $\mathcal{S} = \{S_1, \dots, S_m\}$ of a universe $U = \{u_1, \dots, u_n\}$ such that $S_i \subseteq U$ for all $i$, the Set Cover problem asks whether there exists a subcollection $\mathcal{S}' \subseteq \mathcal{S}$ of size $|\mathcal{S}'| = b$ such that $\bigcup_{S_i \in \mathcal{S}'} S_i = U$.
\end{defn}

Given an instance of Set Cover $\langle \mathcal{S}, U \rangle, b$, we construct a graph $G=(V,E)$ and corresponding node embeddings such that solving our representative selection problem on this graph is equivalent to solving the Set Cover instance.

We create a training node set $\CV_{tr} = \CV_U \cup \CV_{\mathcal{S}}$, where:
\begin{itemize}
\item For each element $u_j \in U$, we create a node $v_{u_j} \in \CV_U$.
\item For each subset $S_i \in \mathcal{S}$, we create a node $v_{S_i} \in \CV_{\mathcal{S}}$.
\end{itemize}
We define the $k$-NN relationship over the learned GNN embeddings such that for each $u_j \in S_i$, node $v_{S_i} \in k$-NN$(v_{u_j})$, i.e., $v_{u_j} \in \rknn(v_{S_i})$. This means that $v_{S_i}$ \textit{represents} all $v_{u_j}$ for which $u_j \in S_i$.

Now, consider the exemplar selection problem: choose a set $\mathbb{A} \subseteq \CV$ of size $b$ that maximizes the representative power:
\begin{equation}
\Pi(\mathbb{A}) = \frac{|\{v \in \CV_{tr} \mid \exists v' \in \mathbb{A} \text{ such that } v \in \text{\rknn}(v')\}|}{|\CV_{tr}|}
\end{equation}

Our construction ensures that $\Pi(\mathbb{A}) = \frac{|\CV_U|}{|\CV_{tr}|}$ if and only if all nodes in $\CV_U$ are covered by the reverse $k$-NNs of nodes in $\mathbb{A}$. This is equivalent to finding $b$ nodes from $\CV_{\mathcal{S}}$ (i.e., subsets in the original Set Cover instance) whose reverse $k$-NNs jointly cover all nodes in $\CV_U$ (i.e., elements in $U$). Thus, there exists a set cover of size $b$ if and only if there exists a node subset $\mathbb{A} \subseteq \CV_{tr}$ of size $b$ such that $\Pi(\mathbb{A}) = \frac{|\CV_U|}{|\CV_{tr}|}$.
\hfill$\square$

\subsection{Proofs of Monotonicity and Submodularity}
\label{app:submodularity_gnn}

\begin{thm}[Monotonicity]
Let $\mathbb{A} \subseteq \mathbb{A}'$ be two subsets of nodes in the graph. Then,
\[
\Pi(\mathbb{A}') - \Pi(\mathbb{A}) \geq 0
\]
\end{thm}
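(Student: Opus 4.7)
The goal is to show that adding nodes to an exemplar set can never decrease its representative power, i.e., $\Pi$ is monotone non-decreasing in its argument. My plan is to unpack the definition of $\Pi$ from Eq.~\ref{eq:setrep_gnn} and exploit the monotonicity of set union under subset inclusion, then divide through by the fixed normalization constant $|\CV_{tr}|$.

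Concretely, the first step is to write
\[
\Pi(\mathbb{A}) \;=\; \frac{1}{|\CV_{tr}|}\left\lvert \bigcup_{v \in \mathbb{A}} \text{\rknn}(v) \right\rvert, \qquad \Pi(\mathbb{A}') \;=\; \frac{1}{|\CV_{tr}|}\left\lvert \bigcup_{v \in \mathbb{A}'} \text{\rknn}(v) \right\rvert.
\]
Since $\mathbb{A} \subseteq \mathbb{A}'$, every index $v$ appearing in the first union also appears in the second, so
\[
\bigcup_{v \in \mathbb{A}} \text{\rknn}(v) \;\subseteq\; \bigcup_{v \in \mathbb{A}'} \text{\rknn}(v).
\]
Taking cardinalities preserves the inclusion (this is the only nontrivial fact invoked, and it is elementary), yielding $|\bigcup_{v \in \mathbb{A}} \text{\rknn}(v)| \leq |\bigcup_{v \in \mathbb{A}'} \text{\rknn}(v)|$. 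Dividing both sides by the positive constant $|\CV_{tr}|$ gives $\Pi(\mathbb{A}) \leq \Pi(\mathbb{A}')$, which rearranges to the claim $\Pi(\mathbb{A}') - \Pi(\mathbb{A}) \geq 0$.

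There is essentially no obstacle here; the result is a one-line consequence of set-union monotonicity. The only subtlety worth flagging is that the normalization by $|\CV_{tr}|$ is identical on both sides and does not depend on $\mathbb{A}$ or $\mathbb{A}'$, so it cancels cleanly and does not affect the direction of the inequality. This monotonicity result will be used together with the (forthcoming) submodularity argument to invoke the classical Nemhauser--Wolsey--Fisher guarantee that underlies Theorem~\ref{thm:submodular_gnn}.
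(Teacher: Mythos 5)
Your proof is correct and follows essentially the same route as the paper's: both unpack the definition of $\Pi$, observe that $\bigcup_{v \in \mathbb{A}} \text{\rknn}(v) \subseteq \bigcup_{v \in \mathbb{A}'} \text{\rknn}(v)$ by monotonicity of union under set inclusion, and divide by the fixed constant $|\CV_{tr}|$. No gaps; nothing further is needed.
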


\begin{proof}
Recall that $\Pi(\mathbb{A}) = \frac{|\bigcup_{v \in \mathbb{A}} \text{\rknn}(v)|}{|\CV_{tr}|}$. The denominator $|\CV_{tr}|$ is constant, so we only need to show that the numerator is monotonic.

Since $\mathbb{A}' \supseteq \mathbb{A}$, we have:
\[
\bigcup_{v \in \mathbb{A}'} \text{\rknn}(v) \supseteq \bigcup_{v \in \mathbb{A}} \text{\rknn}(v)
\]
because the union operator is monotonic under set inclusion. Therefore, the size of the covered set cannot decrease as $\mathbb{A}$ grows. This implies:
\[
\Pi(\mathbb{A}') \geq \Pi(\mathbb{A})
\]
\end{proof}

\begin{thm}[Submodularity]
Let $\mathbb{A} \subseteq \mathbb{A}'$ and $v$ be any node in the graph not already in $\mathbb{A}'$. Then,
\[
\Pi(\mathbb{A}' \cup \{v\}) - \Pi(\mathbb{A}') \leq \Pi(\mathbb{A} \cup \{v\}) - \Pi(\mathbb{A})
\]
\end{thm}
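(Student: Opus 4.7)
The plan is to reduce submodularity of $\Pi$ to the classical submodularity of the \emph{coverage function}. Since the denominator $|\CV_{tr}|$ in the definition of $\Pi$ is a positive constant, it suffices to prove submodularity of the unnormalized coverage $f(\mathbb{A}) = \bigl|\bigcup_{v \in \mathbb{A}} \text{\rknn}(v)\bigr|$. I will do this by a direct comparison of marginal gains, which is the cleanest of the equivalent definitions of submodularity for this setting.

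First, I would define the marginal gain of adding a node $v \notin \mathbb{A}$ as
\[
\Delta(v \mid \mathbb{A}) \;=\; f(\mathbb{A}\cup\{v\}) - f(\mathbb{A}) \;=\; \Bigl|\text{\rknn}(v) \,\setminus\, \bigcup_{u \in \mathbb{A}} \text{\rknn}(u)\Bigr|,
\]
using the fact that the union with a single new set adds exactly those elements of $\text{\rknn}(v)$ not already covered. Writing $C(\mathbb{A}) := \bigcup_{u \in \mathbb{A}} \text{\rknn}(u)$, the claim becomes $\Delta(v\mid\mathbb{A}') \le \Delta(v\mid\mathbb{A})$.

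Next, I would exploit the monotonicity of set union: since $\mathbb{A} \subseteq \mathbb{A}'$, we have $C(\mathbb{A}) \subseteq C(\mathbb{A}')$. Therefore
\[
\text{\rknn}(v)\setminus C(\mathbb{A}') \;\subseteq\; \text{\rknn}(v)\setminus C(\mathbb{A}),
\]
because whenever a larger set is subtracted, the remaining elements can only shrink. Taking cardinalities and then dividing by $|\CV_{tr}|$ yields
\[
\Pi(\mathbb{A}'\cup\{v\}) - \Pi(\mathbb{A}') \;\le\; \Pi(\mathbb{A}\cup\{v\}) - \Pi(\mathbb{A}),
\]
which is exactly the submodularity inequality. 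Combined with the monotonicity already established, Nemhauser's classical $(1-1/e)$ guarantee for the greedy algorithm on monotone submodular functions then gives Theorem~\ref{thm:submodular_gnn}.

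There is no real obstacle here: the statement is the textbook coverage-submodularity argument, and the only place where care is needed is to observe that the normalization by $|\CV_{tr}|$ preserves both monotonicity and submodularity (it is a positive scalar multiple). One subtle point worth flagging in the writeup is that Definition~\ref{def:rknn} filters $\text{\rknn}(v)$ by $\Phi(v)=\Phi(u)$; this class-conditional restriction does not affect the argument, since $\text{\rknn}(v)$ is still a fixed subset of $\CV_{tr}$ and the coverage-function proof goes through verbatim for any such family of fixed sets.
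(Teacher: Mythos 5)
Your proof is correct and follows essentially the same route as the paper's: both reduce the claim to the observation that $\bigcup_{u \in \mathbb{A}} \text{\rknn}(u) \subseteq \bigcup_{u \in \mathbb{A}'} \text{\rknn}(u)$ forces $|\text{\rknn}(v) \setminus \bigcup_{u \in \mathbb{A}'} \text{\rknn}(u)| \leq |\text{\rknn}(v) \setminus \bigcup_{u \in \mathbb{A}} \text{\rknn}(u)|$. The only difference is presentational --- the paper wraps this same inequality in a proof by contradiction, whereas you state it directly (and you additionally flag the harmless normalization and class-conditioning details).
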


\textsc{Proof by Contradiction.}  
Assume instead that:
\begin{equation}
\label{eq:submod_contra}
\Pi(\mathbb{A}' \cup \{v\}) - \Pi(\mathbb{A}') > \Pi(\mathbb{A} \cup \{v\}) - \Pi(\mathbb{A})
\end{equation}

Let $R_v = \text{\rknn}(v)$ denote the set of nodes that consider $v$ among their $k$ nearest neighbors. The marginal gain of adding $v$ to a set $\mathbb{A}$ is:
\[
\left|R_v \setminus \bigcup_{v' \in \mathbb{A}} \text{\rknn}(v')\right|
\]

Inequality~\eqref{eq:submod_contra} implies:
\[
|R_v \setminus \bigcup_{v' \in \mathbb{A}'} \text{\rknn}(v')| > |R_v \setminus \bigcup_{v' \in \mathbb{A}} \text{\rknn}(v')|
\]

But since $\mathbb{A}' \supseteq \mathbb{A}$, we have:
\[
\bigcup_{v' \in \mathbb{A}} \text{\rknn}(v') \subseteq \bigcup_{v' \in \mathbb{A}'} \text{\rknn}(v')
\]
which implies the \textit{reverse} inequality:
\[
|R_v \setminus \bigcup_{v' \in \mathbb{A}'} \text{\rknn}(v')| \leq |R_v \setminus \bigcup_{v' \in \mathbb{A}} \text{\rknn}(v')|
\]

This contradicts our assumption in Eq.~\eqref{eq:submod_contra}, completing the proof. \hfill$\square$

\subsection{Proof of Monotonicity and Submodularity under Estimated Rev-k-NN}
\label{app:submodularity_gnn_est}
Let $\widetilde{\text{Rev}}{\text{-}k\text{-NN}}(v)$ denote the approximate reverse k-NN set of a node $v$, and $\widetilde{\Pi}(A)$ denote the approximate coverage function computed using these sets. The greedy algorithm in Theorem \ref{thm:submodular_gnn} provides a $(1 - 1/e)$-approximation to the maximization of $\widetilde{\Pi}(A)$, since the proof relies solely on submodularity and monotonicity of the input function---both of which are preserved under the approximation (as the union of approximate sets still yields a valid set function).

To relate this back to the true optimum $\Pi(A^*)$, we leverage the uniform error bound from Lemma \ref{thm:rknn_gnn} (Appendix \ref{app:rknn}), which ensures that for any set $A$:
$$
|\widetilde{\Pi}(A) - \Pi(A)| \leq \theta \quad \text{with high probability} (\geq 1 - \delta).
$$

Let $\widetilde{A}$ be the set returned by greedy maximization over $\widetilde{\Pi}$, and $A^*$ be the optimal set under the true objective $\Pi$. The greedy guarantee gives:
$$
\widetilde{\Pi}(\widetilde{A}) \geq \left(1 - \frac{1}{e} \right)\widetilde{\Pi}(A^*).
$$

By the uniform bound, we know $\widetilde{\Pi}(A^*) \geq \Pi(A^*) - \theta$, so:
$$
\widetilde{\Pi}(\widetilde{A}) \geq \left(1 - \frac{1}{e} \right)(\Pi(A^*) - \theta) 
= \left(1 - \frac{1}{e} \right)\Pi(A^*) - \left(1 - \frac{1}{e} \right)\theta.
$$

Thus, even when using approximate Rev-k-NN sets, the greedy algorithm achieves the following guarantee on estimated coverage:
$$
\boxed{
\widetilde{\Pi}(\widetilde{A}) \geq \left(1 - \frac{1}{e} \right)\Pi(A^*) - \left(1 - \frac{1}{e} \right)\theta
}
$$
\hfill$\square$

\subsection{Impact of parameters}
\label{app:impact_of_parameters}

All experiments in this section are conducted on two large‐scale benchmarks---WikiCS (homophilous) and Questions (heterophilous)---to evaluate three critical hyperparameters:

\paragraph{Rev‐\(k\)‐NN neighbourhood size (\(k\)).} 
Fig.~\ref{fig:variation_of_k} shows that a moderate choice of \(k\) maximizes fidelity for both WikiCS and Questions. On WikiCS, increasing \(k\) from 1 to 5 raises mean fidelity from 0.72 to 0.78, after which fidelity declines (0.72 at 20, 0.69 at 100). Likewise on Questions, fidelity peaks at 0.92 for \(k=5\) and then falls to 0.86 at \(k=100\). These trends indicate that too small a prototype set under-covers the class distribution, while too large a set introduces noisy, less representative examples.

\paragraph{Training set sampling size.} 
Fig.~\ref{fig:variation_batch_size} evaluates the number of positive and negative nodes used to generate feedback in each self-refinement iteration. Fidelity on both datasets peaks at a sampling size of 20 (0.78 for WikiCS, 0.92 for Questions). Smaller sample sizes (1 or 5) under-represent misclassified patterns, yielding lower fidelity, whereas excessively large samples (100 or 200) introduce so much data that the LLM struggles to distill the most salient corrective signals

\paragraph{Sampling rate for exemplar selection.} 
Fig.~\ref{fig:sampling_tradeoff} shows fidelity and runtime as a function of the fraction of nodes used to compute \rknn prototypes. Fidelity rapidly saturates by 5–10 \% sampling---achieving 0.75–0.78 on WikiCS and 0.86–0.92 on Questions---while full sampling (100 \%) incurs orders-of-magnitude higher runtimes (0.7 s → 690 s on WikiCS; 0.01 s → 41.6 s on Questions) without significant fidelity gains. This demonstrates that a small fraction of the data suffices to produce explanations of near–state-of-the-art fidelity at minimal computational cost.

\paragraph{Practical recommendations.} 
In light of these results, we recommend setting \(k=5\), using a positive–negative sample size of 20 per iteration, and sampling 5–10 \% of nodes for exemplar selection to balance explanation quality and efficiency.

\subsection{Robustness Across \gnn Architectures}
\label{app:robustness_architectures}

To assess the generalizability to \gnn architectures, we apply it to three widely‐used \gnn backbones---GraphSAGE, GCN, and GAT---each trained on the same node‐classification task for the TAGCora dataset. Table~\ref{tab:robustness_architectures} summarizes both the \gnn's accuracy and the fidelity of our extracted rules.

\begin{table}[t]
  \centering
  \caption{Accuracy of the underlying \gnn and fidelity of \name across different \gnn architectures on TAGCora }
  \label{tab:robustness_architectures}
  \begin{tabular}{lcc}
    \toprule
    \textbf{Architecture} & \textbf{GNN Accuracy (\%)} & \textbf{Fidelity (\%)} \\
    \midrule
    GraphSAGE & 68.91 & 76.75 \\
    GCN       & 68.71 & 79.11 \\
    GAT       & 73.84 & 83.00 \\
    \bottomrule
  \end{tabular}
\end{table}

Despite differences in representational power and message‐passing mechanisms, \name achieves consistently high fidelity ranging from 76.8 \% for GraphSAGE to 83.0 \% for GAT. These results confirm that \name reliably translates diverse learned graph representations into interpretable symbolic rules.  \looseness=-1

\subsection{Dataset Descriptions}
\label{app:dataset_description}

We evaluate our method on eight established node‐classification benchmarks, chosen to span homophilous and heterophilous graphs, text‐driven and purely structural domains:

\begin{itemize}
\item \textbf{TAGCora}  
A citation network of 2,708 computer‐science papers connected by 10,556 directed edges.  Each node corresponds to a paper and is featurized by its title and abstract text.  The task is to predict one of seven topic categories at the node level.

  \item \textbf{CiteSeer}  
    A citation network of 3312 publications partitioned into six topic labels.  Edges indicate citation links. Each paper is represented by a 3703‐dimensional binary bag‐of‐words vector.

  \item \textbf{ogbn‐arxiv}  
    A directed graph of all CS arXiv papers indexed by MAG, with 169343 nodes and 1166243 citation edges.  Node features are 128-dimensional vectors formed by averaging skip-gram embeddings over title and abstract text.  The 40 subject‐area labels (e.g.\ cs.AI, cs.LG) yield a 40-way classification task.

  \item \textbf{WikiCS}  
    A homogeneous hyperlink network of 11701 Wikipedia articles across ten computer‐science categories, connected by 216123 edges.  Node features are 300-dimensional averages of pretrained GloVe word embeddings over the article text.

  \item \textbf{Amazon-ratings}  
    A product co-purchase graph (24492 nodes, 93050 edges) where edges link frequently co-bought items.  The goal is to predict discretized average review scores (five classes).  Each product is featurized by the mean of fastText embeddings over its description.

  \item \textbf{Minesweeper}  
    A synthetic $100\times100$ grid (10000 nodes, $\approx$39402 edges) modeling the Minesweeper game.  Twenty percent of cells contain mines.  Node features are one-hot encodings of the true count of adjacent mines, with a binary “unknown” flag on 50\% of nodes.

  \item \textbf{Questions}  
    A user-interaction graph from Yandex Q (48921 nodes, 153540 edges), where an edge indicates one user answered another’s question over one year.  The binary classification task predicts which users remain active.  Node features combine fastText averages over user‐profile text with a binary “no profile” indicator.

  \item \textbf{BA-Shapes}  
    A synthetic heterophilous graph built by attaching 80 five-node “house” motifs to a 300-node Barabási–Albert backbone, then adding random edges (10\% of number of nodes).  Each node is labeled as \emph{tip}, \emph{middle}, \emph{base}, or \emph{background}.
\end{itemize}

\section{Human Evaluation}
\label{app:survey}
We designed and executed a survey in the form of an \textbf{A/B test} to compare the interpretability of text-based explanations versus traditional subgraph-based explanations.

\paragraph{Survey Setup}
Participants were shown global explanations for the same GNN model, one in the form of a subgraph(s) and the other as a textual rule. Their task was to evaluate both independently and indicate which modality better communicated the model’s reasoning for classifying a particular class of nodes. We explicitly clarified the meaning of \textit{global explanations} --- those meant to summarize common decision patterns across all nodes of a given class --- in contrast to \textit{local explanations} that are tailored to individual nodes. To help participants make an informed judgment, we included clear and detailed instructions at the beginning of the survey. These instructions emphasized that:
\begin{itemize}
    \item Each explanation should be evaluated in isolation.
    \item Participants should not attempt to align the textual and subgraph formats.
    \item The goal is to judge which modality stands better on its own in helping them understand the GNN’s reasoning.
\end{itemize}

\paragraph{Participant Demographics}
We recruited $60$ participants with at least a basic proficiency in machine learning for $5$ A/B tests amounting for $300$ total comparisons. All respondents came from a computer science or engineering background, and we deliberately sampled both experts, i.e., researchers actively working on graphs, and non-experts, who may work in other ML domains but are still familiar with model reasoning and classification tasks. This balance allowed us to assess interpretability across varying levels of familiarity with graph-structured data.
\paragraph{Statistical Tests}
To evaluate user preferences between the two explanation modalities presented across five A/B questions, we conducted both a Binomial test and McNemar’s test. The \textbf{Binomial test}, applied to the aggregate responses (275 total comparisons), assesses whether one modality was preferred significantly more often overall. While informative, this test treats each question response as independent and does not account for within-subject consistency. To address this, we additionally used \textbf{McNemar’s test}, which considers per-user paired choices across the five questions, capturing whether individuals systematically favored one modality over the other. Together, these tests provide complementary insights: the binomial test reflects population-level preference strength, while McNemar’s test confirms the consistency of that preference at the individual level.
%
\paragraph{Survey Materials and Transparency}
To ensure full transparency and reproducibility, we include screenshots of key survey components in the Appendix. Fig. \ref{fig:survey_welcome} shows the welcome page, which introduced the task, explained the distinction between global and local explanations, and outlined the survey objective. Fig. \ref{fig:survey_instructions} presents the instructional page detailing how to interpret and evaluate the two explanation modalities. Fig. \ref{fig:survey_question} shows a representative A/B test question featuring a dataset description and both types of explanations. Finally, Fig. \ref{fig:survey_ab} illustrates the evaluation criteria provided before each A/B comparison to guide participants in making modality-specific judgments. These materials demonstrate the care taken to ensure participants understood the task and the evaluation criteria, lending credibility to the findings reported in Sec. \ref{sec:survey}.

\paragraph{Controlling for Content Differences Across Modalities}
The goal of this survey was to compare the modalities of explanation---text versus subgraph---not their content richness. Since our textual explanations are generated automatically, we needed to ensure that the subgraph explanations were as close in content as possible to the text ones, so that the only difference being judged by participants was the modality, not the information conveyed.

Our textual explanations include both structural and feature-based insights. For example, Fig. \ref{fig:global_explanations_tagcora}  shows the explanation for the Rule Learning class in TAGCora: \textit{Publications in ``Rule Learning'' contain keywords such as ``inductive logic programming,'' ``meta-knowledge,'' or ``hypothesis space,'' and/or their immediate and 2-hop citation neighborhoods are predominantly Rule Learning papers.}

This explanation draws on both graph topology and node features (keywords). However, subgraph visualizations cannot communicate high-dimensional features --- how would one visually encode a 128-dimensional feature vector within a graph view?

To address this mismatch and ensure a fair comparison, we intentionally disabled feature information while generating the text explanations used in the survey. This ensured that both explanation types were grounded purely in structure, making the comparison about modality alone. Without this control, the text-based explanations would have had an inherent informational advantage, introducing bias into the survey.

This content difference is illustrated in Fig. ~\ref{fig:global_explanations_tagcora} and \ref{fig:survey_question}. The former shows a feature-rich explanation, while the latter corresponds to the feature-neutral version used in the survey.

\begin{figure}[h]
    \centering
    \includegraphics[width=\linewidth]{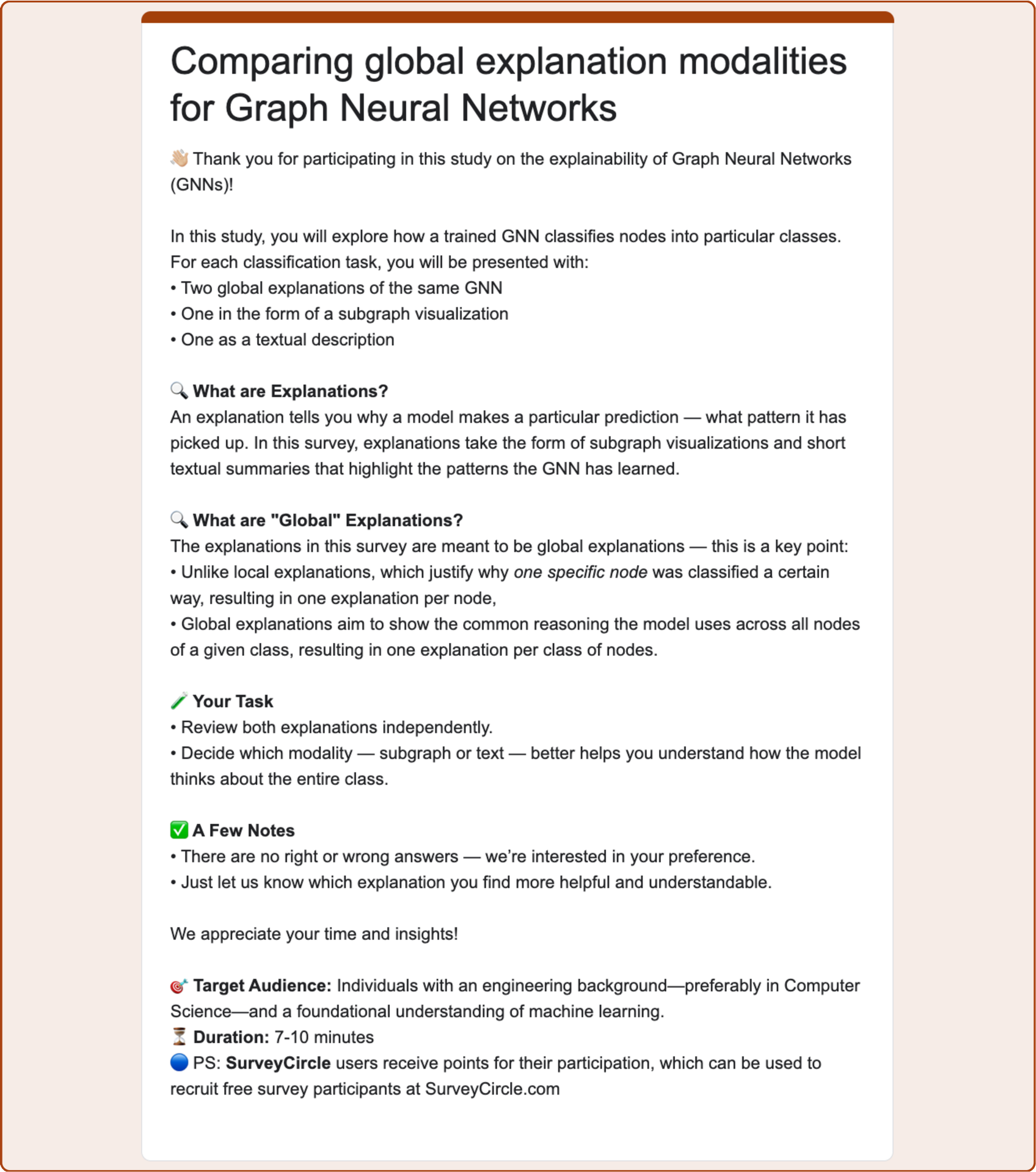}
    \caption{Welcome page of the user study. This page introduced participants to the study's objective, explained the distinction between local and global explanations, described the two explanation modalities, and clarified the task expectations. It also included details about the target audience, estimated completion time, and participant incentives.}
    \label{fig:survey_welcome}
\end{figure}

\begin{figure}[h]
    \centering
    \includegraphics[width=\linewidth]{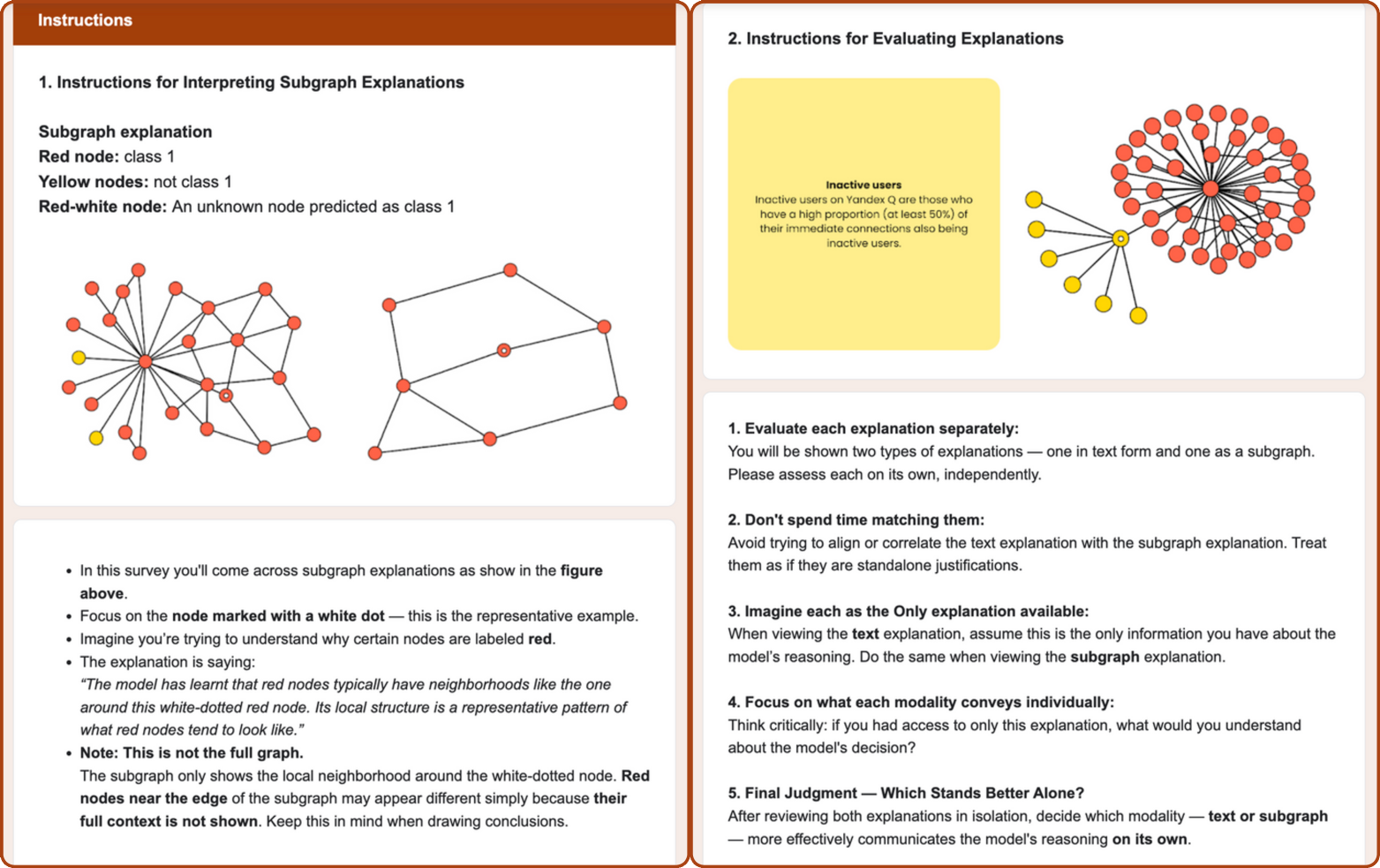}
    \caption{Instructions shown to participants prior to the survey. These guidelines clarified how to interpret subgraph and text-based explanations, emphasized evaluating each modality independently, and explained the concept of global explanations to ensure consistent and meaningful user responses.}
    \label{fig:survey_instructions}
\end{figure}

\begin{figure}[h]
    \centering
    \includegraphics[width=\linewidth]{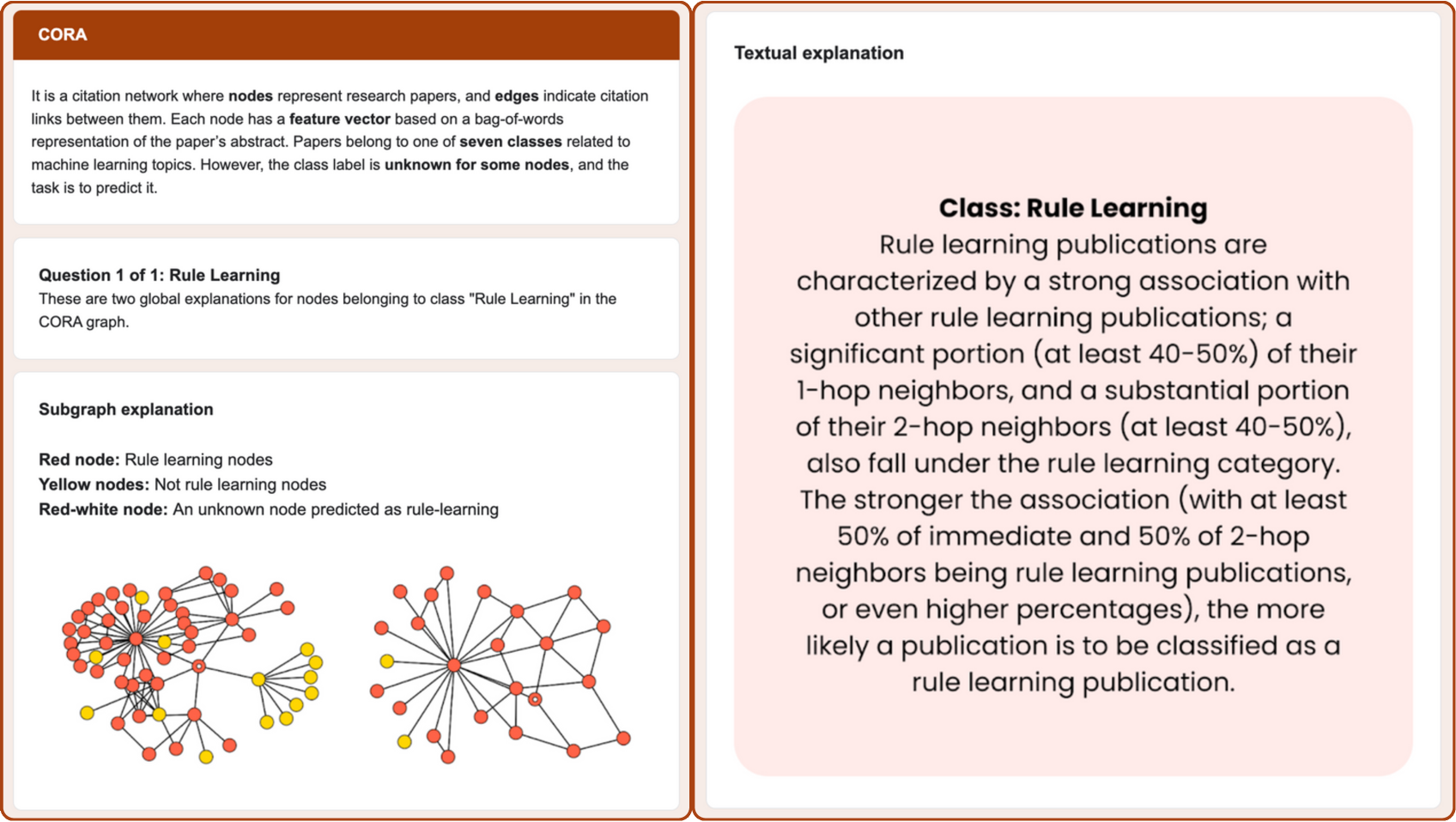}
    \caption{Example survey question shown to participants. This page presents the task context for a global explanation comparison. Participants were shown a dataset description followed by two explanation modalities (text-based and subgraph-based) for the same model prediction and asked to select their preferred one.}
    \label{fig:survey_question}
\end{figure}

\begin{figure}[h]
    \centering
    \includegraphics[width=\linewidth]{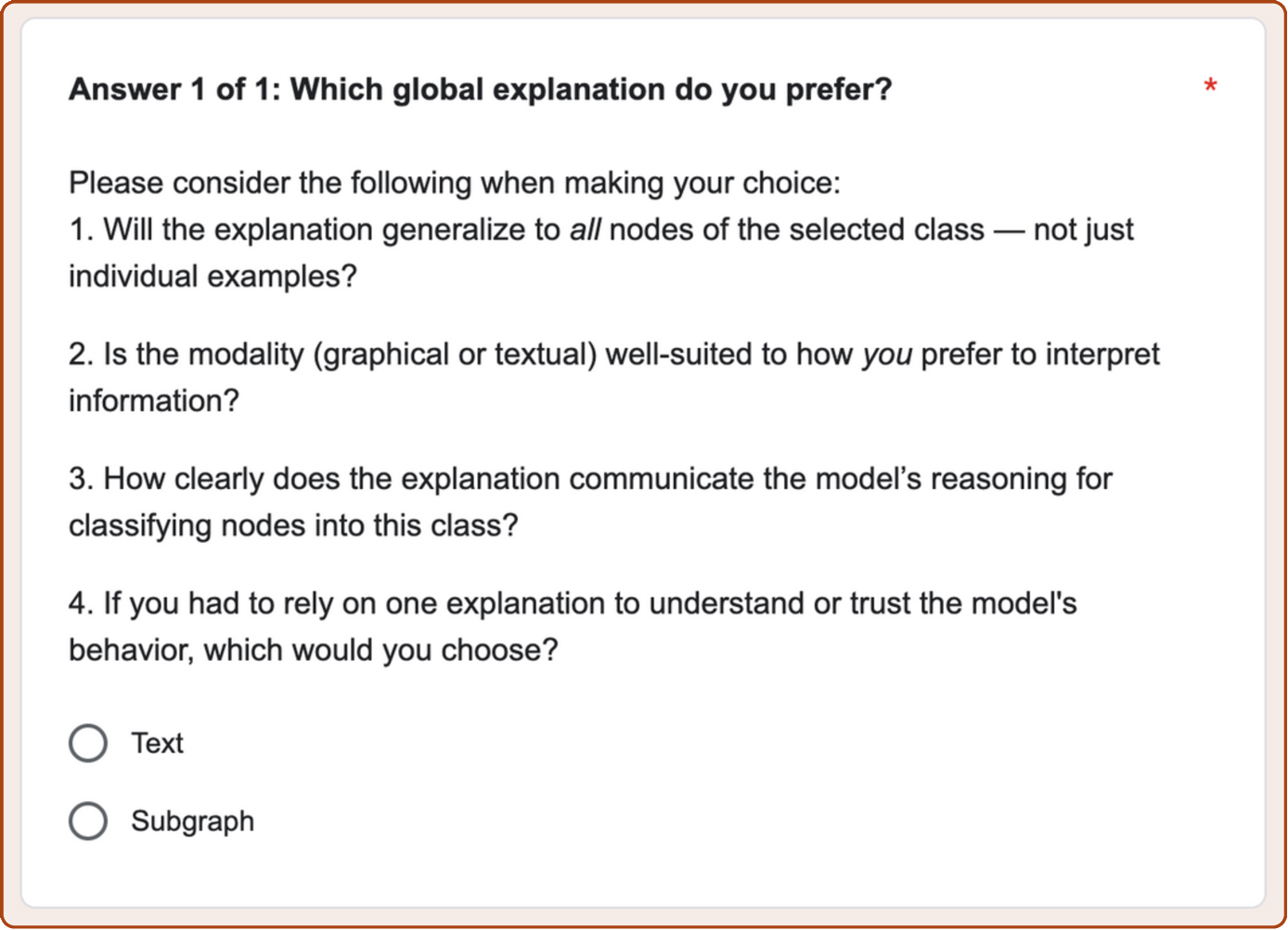}
    \caption{Evaluation guidelines provided before each A/B comparison. Participants were instructed to evaluate each explanation independently, avoid aligning the two modalities, and judge which one more effectively communicates the model’s reasoning on its own. These instructions ensured unbiased, modality-specific assessments.}
    \label{fig:survey_ab}
\end{figure}
\begin{figure}[t]
  \centering
  \begin{tcolorbox}[
      colback=blue!5, colframe=blue!30,
      sharp corners, boxrule=0.5pt,
      width=\textwidth, fontupper=\small
    ]
\begin{lstlisting}
You are given an exemplar node and a set of nodes that belong to its Rev-k-NN set (from a GNN embedding). Also provided are some nodes that do NOT belong to this Rev-k-NN set. Please propose an INITIAL symbolic rule-based formula that outputs 1 for Rev-k-NN set nodes and 0 for non-Rev-k-NN set nodes. Remember that your rules should not be generic and should be specific to the exemplar node and its Rev-k-NN set. Rely ONLY on node features and local adjacency information. Focus on finding what features and structural neighborhood information makes the Rev-k-NN set nodes similar and the non-Rev-k-NN set nodes dissimilar to the exemplar node. Use them to form the rules.

Output your explanation in JSON with a top-level rules list and an interpretation key. Also implement a Python function called classify_node() that takes a node_description as input and returns 1 or 0. Ensure the code can run independently without external file dependencies and without any errors. Avoid using regex, or importing any more external libraries. Also ensure that all the code is inside the classify_node function only, and nothing should be outside it.

Exemplar node: 11757
Exemplar node description:
{
  'node_id': 11757,
  'features': [0.06785757, 0.11838776, 3.94833946],
  '1-hop': {'neighbor_class_freq': {1: 1.0}},
  '2-hop': {'neighbor_class_freq': {1: 1.0}}
}
Rev-k-NN set nodes: [2593, 9375]
Non-Rev-k-NN set nodes: [32414, 14315]

Descriptions of Rev-k-NN set nodes:

Node 2593: {'node_id': 2593, 'features': [0.07221082, 0.11624873, 3.92639644], '1-hop': {'neighbor_class_freq': {1: 1.0}}, '2-hop': {'neighbor_class_freq': {0: 0.4736842, 1: 0.5263158}}}

Node 9375: {'node_id': 9375, 'features': [0.13466489, 0.14399602, 0.05334831], '1-hop': {'neighbor_class_freq': {1: 1.0}}, '2-hop': {'neighbor_class_freq': {0: 0.4736842, 1: 0.5263158}}}

Descriptions of non-Rev-k-NN set nodes:

Node 32414: {'node_id': 32414, 'features': [0.05580222, 0.08147032, -0.06319338], '1-hop': {'neighbor_class_freq': {0: 1.0}}, '2-hop': {'neighbor_class_freq': {0: 1.0}}}

Node 14315: {'node_id': 14315, 'features': [0.08261909, 0.50772273, 0.49023183], '1-hop': {'neighbor_class_freq': {0: 1.0}}, '2-hop': {'neighbor_class_freq': {0: 1.0}}}
\end{lstlisting}
  \end{tcolorbox}
  \caption{Query prompt given to the LLM on the questions dataset}
  \label{fig:query_prompt}
\end{figure}
%
\begin{figure}[t]
  \centering
  \begin{tcolorbox}[
      colback=red!5, colframe=red!30,
      sharp corners, boxrule=0.5pt,
      width=\textwidth, fontupper=\small
    ]
\begin{lstlisting}
Below is the CURRENT formula:
def classify_node(node_description):
    try:
        if node_description['features'][2] > 3.0 and node_description['1-hop']['neighbor_class_freq'].get(1, 0) >= 0.5:
            return 1
        else:
            return 0
    except (KeyError, IndexError):
        return 0

It was evaluated on Rev-k-NN set nodes (should be 1) and non-Rev-k-NN set nodes (should be 0). It yielded 2 false negatives and 0 false positives.

False Negatives (Rev-k-NN set nodes misclassified as non-Rev-k-NN set):
Node 1920: {'node_id': 1920, 'features': array([0.04922352, 0.19213917, 0.6143764 ]), 
            '1-hop': {'neighbor_class_freq': {0: 0.5714, 1: 0.4286}},
            '2-hop': {'neighbor_class_freq': {0: 0.9715, 1: 0.0285}}}
Node 15365: {'node_id': 15365, 'features': array([0.0492574 , 0.19516333, 0.59393265]), 
             '1-hop': {'neighbor_class_freq': {1: 1.0}},
             '2-hop': {'neighbor_class_freq': {0: 0.25, 1: 0.75}}}

False Positives (non-Rev-k-NN set nodes misclassified as Rev-k-NN set):
None.

Your goal is to improve the formula so that it does not make these errors while preserving its correct predictions. Provide detailed feedback on which condition to add, remove, or modify, and state the single most important change to make. Do not supply the revised code yet.
\end{lstlisting}
  \end{tcolorbox}
  \caption{Feedback prompt given to the LLM on the questions dataset}
  \label{fig:feedback_prompt}
\end{figure}
\begin{figure}[t]
  \centering
  \begin{tcolorbox}[
      colback=yellow!5, colframe=yellow!30,
      sharp corners, boxrule=0.5pt,
      width=\textwidth, fontupper=\small
    ]
\begin{lstlisting}
Below is the history of previous iterations:
Iteration 0:
Formula:

def classify_node(node_description):
    try:
        if node_description['features'][2] > 3.0 and node_description['1-hop']['neighbor_class_freq'].get(1, 0) >= 0.5:
            return 1
        else:
            return 0
    except (KeyError, IndexError):
        return 0

 Positives Accuracy: 0.46031746031746035, Negatives Accuracy: 1.0
Actionable Feedback:
The single most important change to improve the formula is to increase the threshold for node['1-hop']['neighbor_class_freq'].get(1, 0) from 0.5 to 0.7. The current threshold is too lenient, allowing nodes with a significant proportion of non-class 1 neighbors to be incorrectly classified as Rev-k-NN set members. Raising the threshold will better discriminate between Rev-k-NN set and non-Rev-k-NN set nodes based on 1-hop neighborhood information. Further adjustments to the feature threshold might also be beneficial, but addressing the neighbor frequency threshold is the most impactful initial step.

Using the history above, and the last feedback, please produce a REVISED formula in JSON format with a top-level 'rules' list and an 'interpretation' key. Your goal is to improve the accuracy metrics by addressing the feedback provided. You must achieve at least 95% positive and negative accuracy. Also, implement a Python function called classify_node() that takes a node_description as input and returns 1 or 0. Ensure the code can run independently without external file dependencies and without any errors. Avoid using regex, or importing any more external libraries. Also ensure that all the code is inside the classify_node function only, and nothing should be outside it.
\end{lstlisting}
\end{tcolorbox}
\caption{Refine prompt given to the LLM on the questions dataset}
\label{fig:refine_prompt}
\end{figure}
%

\begin{figure}[t]
  \centering
  \begin{minipage}{\textwidth}
    \begin{lstlisting}[language=Python,
                       basicstyle=\small\ttfamily,
                       linewidth=\textwidth]
def classify_class_0(node_description):
    features = node_description.get('features', '')
    one_hop  = node_description.get('1-hop',  {}).get('neighbor_class_freq', {})
    two_hop  = node_description.get('2-hop',  {}).get('neighbor_class_freq', {})

    # Exemplar #1 signature
    score1  = 0.5 * int(any(k in features
                           for k in ['ILP','meta-knowledge','hypothesis space']))
    score1 += 0.3 * int(one_hop.get(0,0) > 0.8)
    score1 += 0.2 * int(two_hop.get(0,0) > 0.8)
    cond1   = (score1 >= 0.5)

    # Exemplar #2 signature
    score2  = 0.5 * int(any(kw in features.lower()
                           for kw in ['logic program',
                                      'inductive logic programming',
                                      'ilp']))
    score2 += 0.3 * int(one_hop.get(0,0) > 0.7)
    score2 += 0.2 * int(two_hop.get(0,0) > 0.5)
    cond2   = (score2 >= 0.5)

    return cond1 or cond2
    \end{lstlisting}
  \end{minipage}

  \vspace{1em}

  \begin{minipage}{\textwidth}
    \textbf{Class 0 (Rule Learning)}\\[0.5ex]
    A publication belongs to the \emph{Rule Learning} class if its text
    mentions “inductive logic programming,” “meta-knowledge,” or
    “hypothesis space,” \emph{or} if a large majority of its 1-hop
    and 2-hop citation neighbors also belong to Rule Learning.
  \end{minipage}

  \caption{Global explanation for TAGCora class 0 (Rule Learning):
           Top - the Python implementation as the OR of two exemplar signatures;
           Bottom - the human‐readable description.}
  \label{fig:tagcora_class0}
\end{figure}
\begin{figure}[t]
  \centering
\begin{tcolorbox}[colback=gray!5!white,colframe=black!75!black,title={Global Explanations for \textsc{TAGCora}}]
  \textbf{Class 0 (Rule Learning)}\\
  Publications in “Rule Learning” contain keywords such as “inductive logic programming,” “meta-knowledge,” or “hypothesis space,” and/or their immediate and 2-hop citation neighborhoods are predominantly Rule Learning papers.

  \medskip
  \textbf{Class 1 (Neural Networks)}\\
  Papers in “Neural Networks” mention terms like “neural network,” “backpropagation,” “neurons,” or “activation functions,” and/or their 1- and 2-hop neighbors are largely in the Neural Networks class; occasional focus on ICA or HMM further reinforces this label, while mentions of other paradigms reduce its likelihood.

  \medskip
  \textbf{Class 2 (Case‐Based)}\\
  Case Based publications include phrases such as “case-based reasoning,” “adaptation,” or “planning,” and frequently cite other Case Based works within one or two hops; secondary indicators include words like “learning” and “explanation.”

  \medskip
  \textbf{Class 3 (Genetic Algorithms)}\\
  Genetic Algorithms papers feature keywords like “genetic algorithm,” “evolutionary algorithm,” or “genetic programming,” and/or their 1-hop and 2-hop citation graphs are enriched with other Genetic Algorithms publications.

  \medskip
  \textbf{Class 4 (Theory)}\\
  “Theory” articles discuss Bayesian methods, classifiers, or decision trees, and/or they are frequently cited by, and cite, other Theory papers within one or two hops; a strong Theory neighborhood further disambiguates this class.

  \medskip
  \textbf{Class 5 (Reinforcement Learning)}\\
  Reinforcement Learning papers mention “reinforcement learning,” “TD learning,” or related terms such as “real-time,” “online,” or “neuro-evolution,” and/or their direct and extended citations predominantly belong to the Reinforcement Learning class.

  \medskip
  \textbf{Class 6 (Probabilistic Methods)}\\
  Probabilistic Methods works include keywords like “Bayesian network,” “Markov chain,” “density estimation,” “Gibbs sampling,” or “MCMC,” and/or at least 20–80\% of their 1-hop and 2-hop neighbors are also Probabilistic Methods publications.
\end{tcolorbox}
\caption{High-level, human-readable rule summaries for each class in the TAGCora dataset generated by our global explanation pipeline (see Fig.~\ref{fig:tagcora_class0} for the detailed Python implementation and interpretation of the Rule Learning class).}

  \label{fig:global_explanations_tagcora}
\end{figure}

\begin{figure}[t]
  \centering
  \begin{tcolorbox}[colback=blue!5!white,colframe=blue!75!black,title={Global Explanations for BAShapes}]
    \textbf{Class 0 (Not in House)}\\
    Nodes not belonging to a house are characterized by having many (at least 6, often more than 8) neighbors that also don’t belong to a house, and only a few (at most 2) neighbors in the house structure---reflecting their isolation from any house motifs.

    \medskip
    \textbf{Class 1 (Middle of House)}\\
    A “middle” node sits on the roof slope: it has exactly one tip neighbor, one base neighbor, and the other middle node as motif-neighbors, and at most two additional connections to non-house nodes.

    \medskip
    \textbf{Class 2 (Base of House)}\\
    A “base” node is directly connected to exactly one middle node and exactly one other base node, with no links to any other classes.

    \medskip
    \textbf{Class 3 (Tip of House)}\\
    A “tip” node occupies the apex of the house motif: it connects exclusively to two class-1 (middle) nodes and has no edges to any other class, forming the roof’s peak.
  \end{tcolorbox}
  \caption{High-level, human-readable rule summaries for each class in the BAShapes dataset generated by our global explanation pipeline.}
  \label{fig:global_explanations_bashapes}
\end{figure}
\begin{figure}[t]
  \centering
\begin{tcolorbox}[colback=gray!5!white,colframe=yellow!75!black,title={Global Explanations for \textsc{Questions}}]
  \textbf{Class 0 (Active Users)}\\
  Active users on Yandex Q are those whose immediate neighbors and 2-hop neighbors are also predominantly active, indicating engagement within a highly interactive community.

  \medskip
  \textbf{Class 1 (Inactive Users)}\\
  Inactive users exhibit feature profiles closely matching the exemplar node 11757 and have at least 50\% of their immediate neighbors inactive, reflecting their embedding within a low-activity subnetwork.
\end{tcolorbox}
\caption{High-level, human‐readable rule summaries for each class in the Questions dataset generated by our global explanation pipeline.}
  \label{fig:global_explanations_questions}
\end{figure}

\renewcommand{\arraystretch}{1.5}
\begin{table}[h]
  \caption{Precision}
  \label{tab:precision}
  \centering
  \resizebox{\textwidth}{!}{
    \begin{tabular}{lllll  |lllll}
      \toprule
      \multicolumn{5}{c}{\textbf{Homophilous}}      &
      \multicolumn{4}{c}{\textbf{Heterophilous}}    \\
      \midrule
                              & \textbf{TAGCora}        & \textbf{Citeseer} & \textbf{WikiCS}       & \textbf{ogbn-arxiv} &
      \textbf{Amazon-ratings}   & \textbf{Questions}& \textbf{Minesweeper}  & \textbf{BA-Shapes} \\

      \midrule

      \textbf{GNNInterpreter} & NA & $0.50 \pm 0.00$ & NA & NA & NA & NA & $0.50 \pm 0.00$ & $0.54 \pm 0.10$ \\

      \textbf{GCNeuron} & $0.32 \pm 0.00$ & $0.16 \pm 0.00$ & NA & NA &
      $0.75 \pm 0.10$ & NA & $0.58 \pm 0.00$ & $0.16 \pm 0.00$ \\

    \textbf{GLGExplainer} & NF & NF & OOM & OOM &
      NF& OOM& $0.25 \pm 0.01$ & $0.37 \pm 0.09$ \\

      \midrule
      
      \textbf{Ours} & \boldmath{$0.85 \pm 0.03$} & \boldmath{$0.95 \pm 0.05$} & \boldmath{$0.98 \pm 0.00$} & \boldmath{$0.99 \pm 0.00$} &
      \boldmath{$0.81 \pm 0.05$} & \boldmath{$0.96 \pm 0.00$} & \boldmath{$0.81 \pm 0.02$} & \boldmath{$0.95 \pm 0.01$} \\
      \bottomrule
    \end{tabular}
  }
\end{table}


            
      

      
      


\renewcommand{\arraystretch}{1.5}
\begin{table}[h]
  \caption{Recall}
  \label{tab:recall}
  \centering
  \resizebox{\textwidth}{!}{
    \begin{tabular}{lllll  |lllll}
      \toprule
      \multicolumn{5}{c}{\textbf{Homophilous}}      &
      \multicolumn{4}{c}{\textbf{Heterophilous}}    \\
      \midrule
                              & \textbf{TAGCora}        & \textbf{Citeseer} & \textbf{WikiCS}       & \textbf{ogbn-arxiv} &
      \textbf{Amazon-ratings}    & \textbf{Questions}& \textbf{Minesweeper}  & \textbf{BA-Shapes} \\

      \midrule
            
      \textbf{GNNInterpreter} & NA & $1.00 \pm 0.00$ & NA & NA & NA & NA & $1.00 \pm 0.00$ & $0.29 \pm 0.10$ \\
      
      \textbf{GCNeuron} & $0.17 \pm 0.00$ & $0.33 \pm 0.00$ & NA & NA &
      $0.27 \pm 0.00$ & NA & $0.53 \pm 0.00$ & $0.33 \pm 0.00$ \\

      \textbf{GLGExplainer} & NF& NF& OOM& OOM &
      NF& OOM& $0.49 \pm 0.01$ & $0.22 \pm 0.12$ \\
      
      \midrule
      
      \textbf{Ours} & \boldmath{$0.82 \pm 0.02$} & \boldmath{$0.89 \pm 0.06$} & \boldmath{$0.58 \pm 0.03$} & \boldmath{$0.67 \pm 0.02$} &
      \boldmath{$0.80 \pm 0.02$}  & \boldmath{$0.81 \pm 0.04$} & \boldmath{$0.96 \pm 0.00$} & \boldmath{$0.91 \pm 0.00$} \\
      \bottomrule
    \end{tabular}
  }
\end{table}








\renewcommand{\arraystretch}{1.5}
\begin{table}[h]
  \caption{F1 Score}
  \label{tab:f1}
  \centering
  \resizebox{\textwidth}{!}{
    \begin{tabular}{lllll  |lllll}
      \toprule
      \multicolumn{5}{c}{\textbf{Homophilous}}      &
      \multicolumn{4}{c}{\textbf{Heterophilous}}    \\
      \midrule
                              & \textbf{TAGCora}        & \textbf{Citeseer} & \textbf{WikiCS}       & \textbf{ogbn-arxiv} &
      \textbf{Amazon-ratings} & \textbf{Questions}& \textbf{Minesweeper}  & \textbf{BA-Shapes} \\

      \midrule

      \textbf{GNNInterpreter} & NA & $0.67 \pm 0.00$ & NA & NA &
                                NA           & NA & $0.67 \pm 0.00$ & $0.34 \pm 0.10$ \\

      \textbf{GCNeuron} & $0.15 \pm 0.00$ & $0.22 \pm 0.00$ & NA & NA &
                          $0.26 \pm 0.00$ & NA & $0.45 \pm 0.00$ & $0.22 \pm 0.00$ \\

      \textbf{GLGExplainer} & NF & NF & OOM & OOM &
                              NF & OOM & $0.33 \pm 0.01$ & $0.24 \pm 0.11$ \\

      \midrule

      \textbf{Ours} & \boldmath{$0.82 \pm 0.01$} & \boldmath{$0.92 \pm 0.03$} & \boldmath{$0.71 \pm 0.03$} & \boldmath{$0.77 \pm 0.01$} &
                      \boldmath{$0.80 \pm 0.02$} & \boldmath{$0.86 \pm 0.03$} & \boldmath{$0.87 \pm 0.01$} & \boldmath{$0.93 \pm 0.00$} \\
      \bottomrule
    \end{tabular}
  }
\end{table}

\begin{figure}[h]
  \centering
  \includegraphics[width=0.9\textwidth]{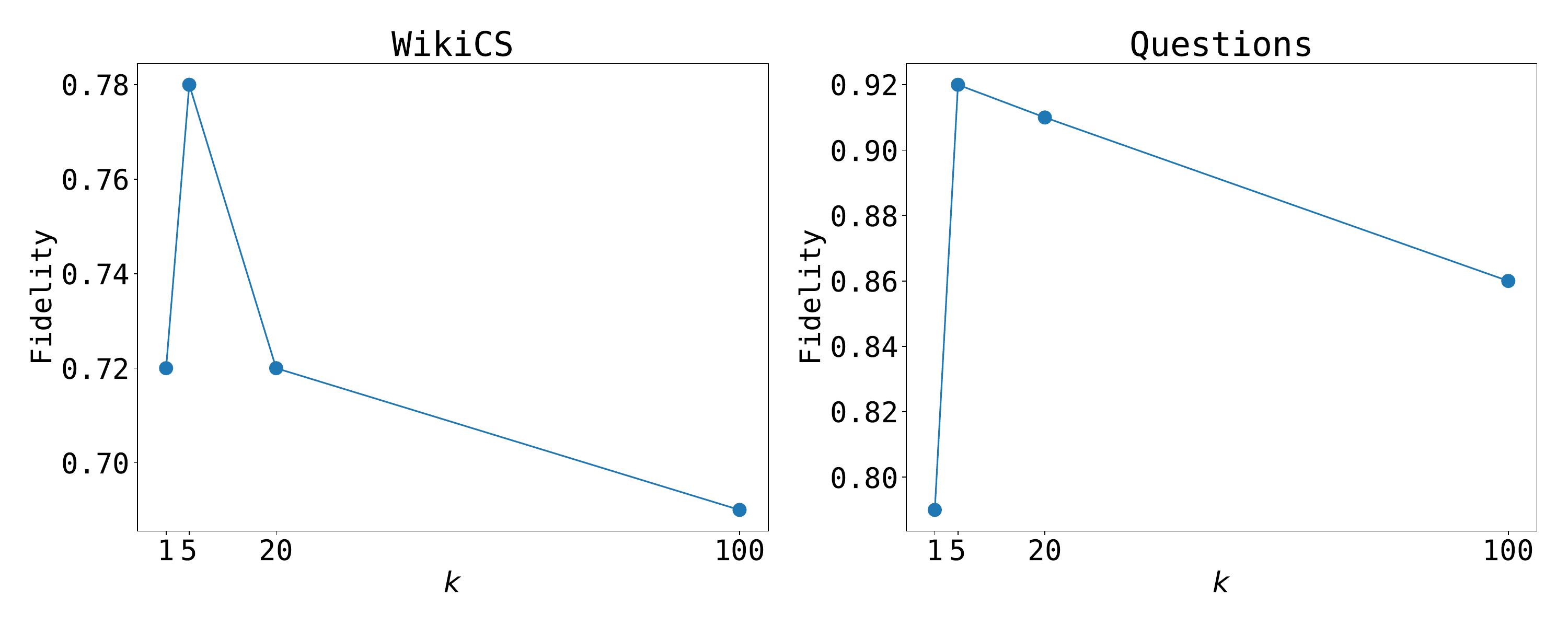}
  \caption{Effect of the \rknn $k$ on explanation fidelity for WikiCS and Questions. Each subplot plots mean fidelity over five runs as $k$ varies.}
  \label{fig:variation_of_k}
\end{figure}
\vspace{-1in}
\begin{figure}[t]
  \centering
  \includegraphics[width=0.9\textwidth]{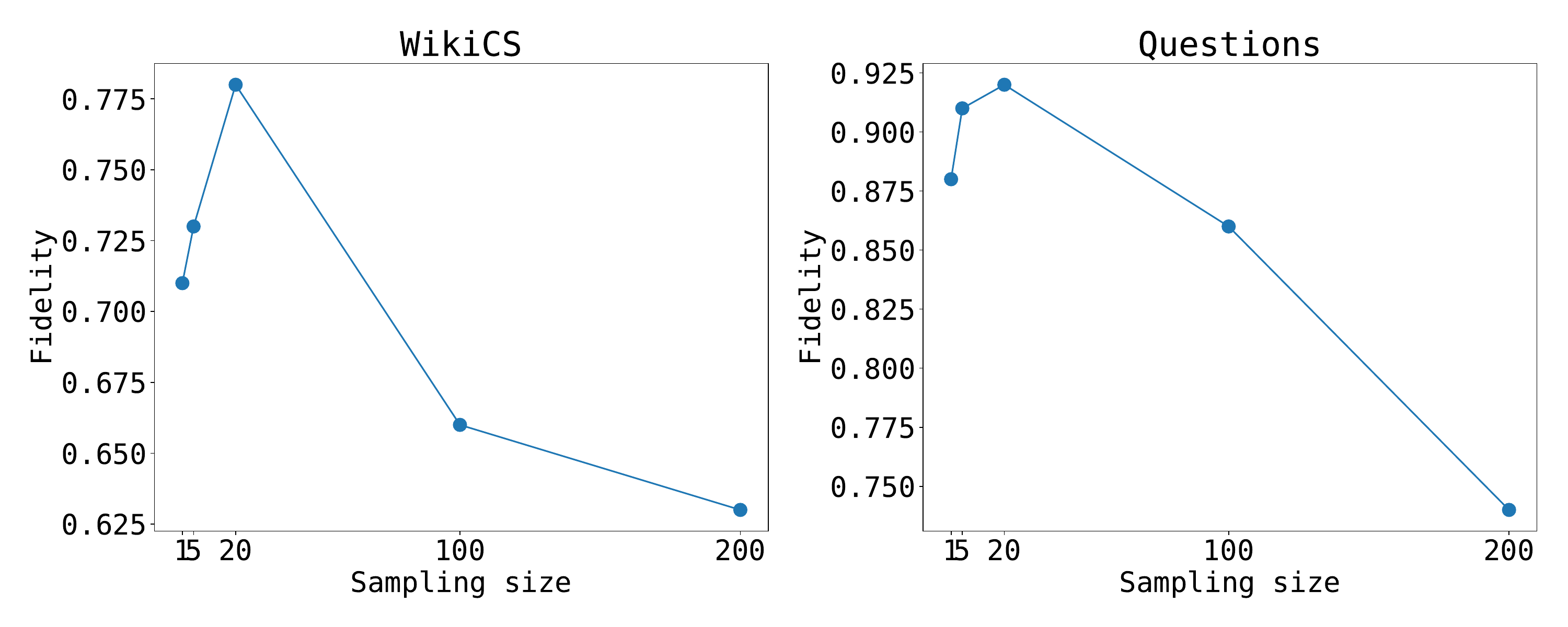}
  \caption{Impact of the training set sampling size on the explanation fidelity for WikiCS and Questions.}
  \label{fig:variation_batch_size}
\vspace{-0.2in}
\end{figure}
%
%
\begin{figure}[h]
  \centering
  \begin{subfigure}[b]{0.9\textwidth}
    \centering
    \includegraphics[width=\textwidth]{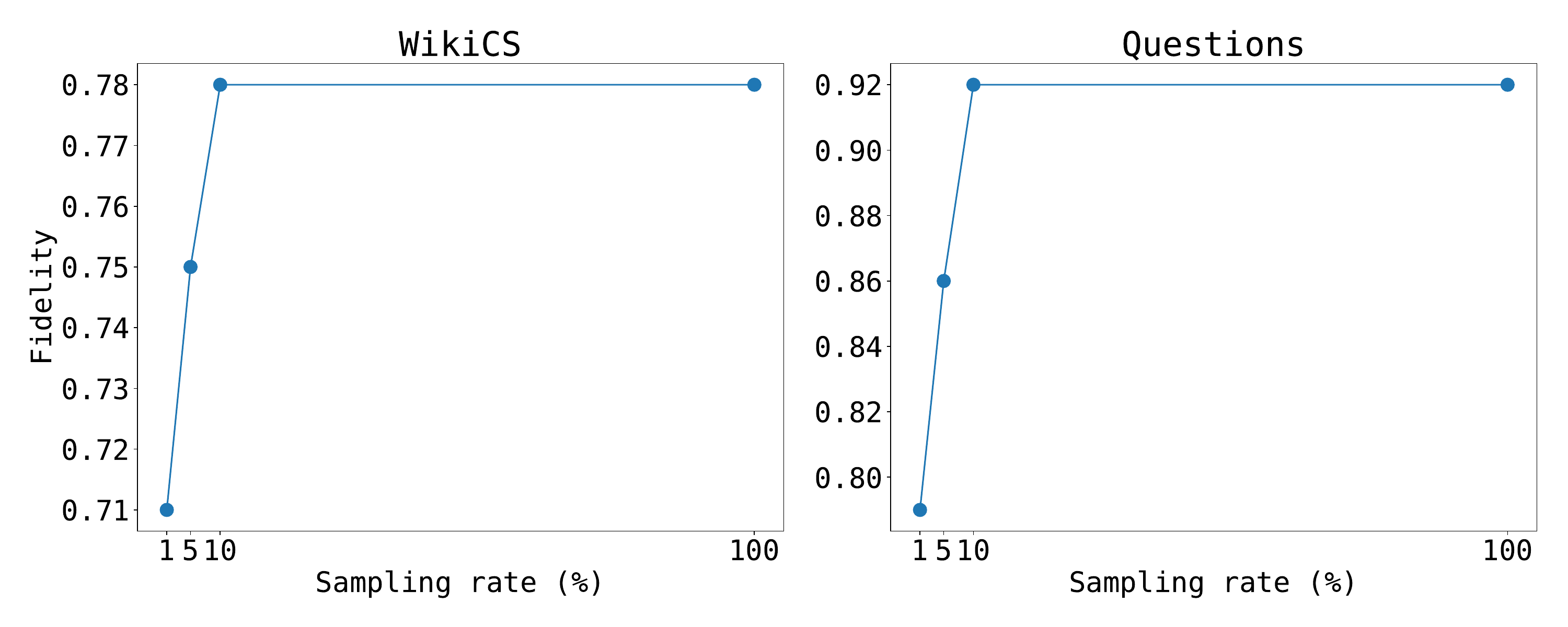}
    \caption{Fidelity vs.\ sampling rate}
    \label{fig:fidelity_sampling}
  \end{subfigure}
  \vspace{1em}
  \begin{subfigure}[b]{0.9\textwidth}
    \centering
    \includegraphics[width=\textwidth]{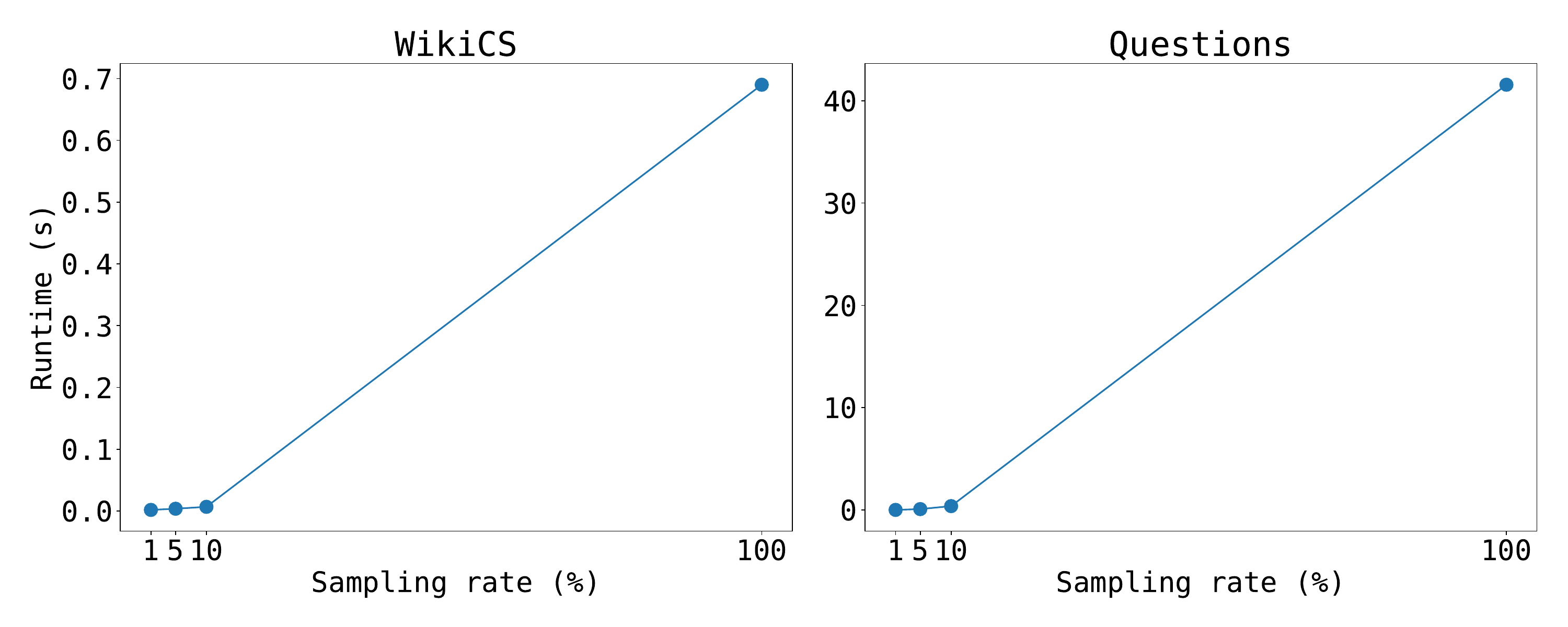}
    \caption{Runtime vs.\ sampling rate}
    \label{fig:runtime_sampling}
  \end{subfigure}
  \caption{Trade‐off between runtime and fidelity as a function of sampling rate for WikiCS and Questions: (a) Fidelity as a function of sampling rate; (b) Exemplar selection runtime versus sampling rate.}
  \label{fig:sampling_tradeoff}
\end{figure}

\end{document}